\def\eqref#1{equation~\ref{#1}}
\def\1{\bm{1}}
\DeclareMathAlphabet{\mathsfit}{\encodingdefault}{\sfdefault}{m}{sl}
\SetMathAlphabet{\mathsfit}{bold}{\encodingdefault}{\sfdefault}{bx}{n}
\title{3D-Properties: Identifying Challenges in DPO and Charting a Path Forward}
\author{Yuzi Yan$^{1, 3}$\thanks{Equal contribution.} \thanks{Interns at Baichuan AI}, Yibo Miao$^{2, 3}$\footnotemark[1] \footnotemark[2], Jialian Li$^{3}$, Yipin Zhang$^{3}$, Jian Xie$^{3}$, Zhijie Deng$^{2}$\thanks{Corresponding authors.}\,, Dong Yan$^{3}$\footnotemark[3] \\
        \textsuperscript{1}Department of Electronic Engineering, Tsinghua University \\
        \textsuperscript{2}Qing Yuan Research Institute, SEIEE, Shanghai Jiao Tong University \\
        \textsuperscript{3}Baichuan AI\\
        }
\begin{document}

\maketitle

\begin{abstract}
Aligning large language models (LLMs) with human preferences has gained significant attention, with Proximal Policy Optimization (PPO) as a standard yet computationally expensive method and Direct Preference Optimization (DPO) as a more efficient alternative. While DPO offers simplicity, it remains underutilized in state-of-the-art LLMs, suggesting potential limitations. In this work, we revisit DPO, analyzing its theoretical foundations and empirical performance to bridge this gap. We identify three key properties—termed \textbf{3D}-properties—that emerge from DPO’s learning process: \textbf{D}rastic drop in rejected response likelihood, \textbf{D}egradation into response suppression, and \textbf{D}ispersion effect on unseen responses. We show that these issues arise from DPO’s optimization dynamics, where the interaction between chosen and rejected response gradients leads to instability. Our findings are supported by experiments on both a controlled toy model and real-world LLM tasks, including mathematical problem-solving and instruction following. To address these challenges, we propose simple regularization techniques that improve training stability and performance. Additionally, we examine how preference data distribution impacts DPO’s effectiveness, offering insights into how alignment models handle out-of-domain (OOD) data. Our work connects these observations to broader research and provides a theoretical explanation for DPO’s limitations. We hope these insights will guide future advancements in reward-model-free preference learning, bringing it closer to reward-model-based approaches.
\end{abstract}

\section{Introduction}
Large language models (LLMs) have demonstrated exceptional performance across a wide range of tasks and domains~\citep{touvron2023llama, chowdhery2023palm, jiang2023mistral, zhang2022opt}. Several techniques have been developed for fine-tuning LLMs, most notably Supervised Fine-Tuning (SFT) and Reinforcement Learning from Human Feedback (RLHF)~\citep{achiam2023gpt, touvron2023llama}. SFT involves directly training LLMs on labeled data to tailor their responses for specific tasks, whereas RLHF refines LLMs by incorporating feedback that aligns their outputs with human preferences. RLHF, in particular, has been instrumental in expanding the application of both closed-source~\citep{OMS, claude_2024, team2023gemini} and open-source LLMs~\citep{touvron2023llama, yang2023baichuan}, driven by the need to align foundational models with human
values and preferences~\citep{ziegler2019fine, stiennon2020learning, ouyang2022training}.

Existing RLHF methods can be majorly categorized into two classes based on whether the reward signal is explicitly modeled. \emph{Reward-model-based (RM-based) alignment} pioneered by OpenAI~\citep{ouyang2022training,achiam2023gpt, touvron2023llama} first trains a Reward Model (RM) from user preferences, typically through Maximum Likelihood Estimation (MLE), and then leverages actor-critic algorithms such as Proximal Policy Optimization (PPO)~\citep{schulman2017proximal} to tune the SFT model to realize alignment. This approach often requires substantial computational resources and suffers from sample inefficiency~\citep{choshen2019weaknesses}. Conversely, another class of methods, known as \emph{reward-model-free (RM-free) alignment}, such as Direct Preference Optimization (DPO)~\citep{rafailov2024direct}, Identity Preference Optimization (IPO)~\citep{azar2024general}, Sequence Likelihood Calibration (SLiC)~\citep{zhao2023slic}, DPO-positive~\citep{pal2024smaug} and Simple Preference Optimization (SimPO)~\citep{meng2024simpo}, do not rely on an extra RM. These approaches offer a more resource-efficient alternative by optimizing the policy directly from preferences, therefore attracting much attention from the academic community, where computational resources are often limited.

In this work, we begin our analysis by using the vanilla DPO as a case study, subsequently extending our findings to encompass broader RM-free alignment strategies. Despite its simplicity and promise, DPO has exhibited several perplexing phenomena that remain unclear or underexplained in practice. One notable counter-intuitive observation is that the likelihood of both preferred and rejected responses tends to decrease over the course of DPO training~\citep{yuan2024advancing, pytheia_MM}, while the likelihood of certain tokens diverging from the training data increases~\citep{xu2024dpo}. Additional observations are summarized in Section~\ref{subsec:Unexplored Facts about DPO and its Variants}. Without a deeper theoretical exploration of these phenomena, purely empirical efforts to apply or improve DPO are likely to face inefficiencies.

Our work identifies the issues surrounding vanilla DPO and its variants from both theoretical and practical perspectives. The analysis reveals inherent instability in the DPO training process, which we encapsulate as the \textbf{3D}-properties: \textbf{D}rastic drop in the likelihood of rejected responses, \textbf{D}egradation into response suppression, and \textbf{D}ispersion effect on unseen responses. Through our analytical framework, we show that these phenomena stem from the inherent features of DPO's optimization objective, where the interaction between the gradients of chosen and rejected responses leads to instability and hinders overall performance. Furthermore, our findings confirm that the distribution of preference data critically influences DPO’s effectiveness, with on-policy DPO performing better than off-policy DPO, which is consistent with concurrent empirical studies~\citep{tang2024understanding, guo2024direct}. 

To enhance DPO's stability and performance, we propose several regularization methods, including the adaptive adjustment of weights on the gradients of chosen and rejected responses, as well as incorporating an SFT loss into the objective. Our results suggest a fundamental trade-off within the DPO algorithm: balancing the mitigation of the 3D-properties while preventing LLMs from straying too far from the preference learning paradigm. Additionally, we compare DPO with the state-of-the-art RM-based method, RLHF-PPO, revealing that its superiority stem largely from avoiding the 3D-properties. Our experimental approach begins with the design of a toy model to quickly validate our hypotheses, followed by a rigorous test of the actual performance of real LLMs on tasks such as mathematical problem solving and instruction following. 

As this topic has garnered significant attention recently, an increasing number of works are contributing to the discussion. To highlight the contributions of our approach, we compare our findings with several of the most relevant concurrent studies in Section~\ref{subsec:comparison with related contemporary studies}. A comprehensive review of related works is provided in Appendix~\ref{supple:sec:related work}.

\section{Preliminaries}

\textbf{Large Language Model (LLM).} An LLM defines a $\theta$-parameterized conditional distribution $\pi_\theta(a | x)$, which takes a prompt $x$ as input and produces a response $a$. 
More specifically, the sampling from LLMs is performed in an auto-regressive manner,
$\pi_\theta(a|x)=\prod_{t} \pi_{\theta} (a_t | x, a_{1:t-1})$,
where $a_t$ is the $t$-th token in the response $a$ and $a_{1:t-1}$ are tokens in the response before $a_t$. 

\textbf{RM-based RLHF.} Training LLMs typically involves three stages: Pretraining, SFT, and RLHF. We outline the standard PPO paradigm here, which is a typical RM-based RLHF algorithm. Beginning with a well-trained SFT model, denoted as $\pi_{0}$, we proceed by sampling two responses from $\pi_{0}$ for each instance in a given prompt set. Subsequently, we compile a preferece dataset $\mathcal{D} = \{(x, a^+, a^-)\}$, where $a^+$ and $a^-$ denote human-preferred and human-dispreferred completions, respectively. The distribution of the preference dataset is assumed to follow the Bradley-Terry model~\citep{Bradley_Terry}, i.e., the probability of response $a^+$ is better than $a^-$ is given by:
\#
p_r(a^+ \succ a^-| x) = \frac{\exp(r(x, a^+))}{\exp(r(x, a^+))+\exp(r(x, a^-))} = \sigma(r(x, a^+) - r(x, a^-)),
\#
where $\succ$ represents the preference relation, and $\sigma(x)=\frac{1}{1+e^{-x}}$ is the sigmoid function. To train a RM $r(\cdot, \cdot)$, we maximize the log-likelihood of the observed preferences by minimizing the following loss function:
\#
\label{eq:mle}
    \ell_R(r) = - \sum_{(x, a^+, a^-)}  \log p_r(a^+ \succ a^-| x) = -\sum_{(x, a^+, a^-)} \log \sigma(r(x, a^+) - r(x, a^-)).
\#

During the reinforcement learning phase, we update the LLM to maximize the return from the learned RM using the following objective function:
\#
\label{eq:rl}
    \max_\theta J_r(\theta) = \max_\theta \sum_{x} \mathbb{E}_{a \sim \pi_\theta(\cdot|x)}\left[r(x, a) - \beta \log \frac{\pi_\theta(a|x)}{\pi_{0}(a|x)}\right],
\#
where $\pi_\theta$ is initialized as $\pi_0$ and $\beta$ controls the deviation from the original model. PPO~\citep{schulman2017proximal} is typically used to solve the problem in practice.  Algorithms that optimize the policy using a separate RM are referred to as \emph{RM-based} alignment.

\textbf{DPO.} Instead of learning a separate RM, DPO~\citep{rafailov2024direct} directly optimizes the policy $\pi_\theta$ over preference data. DPO implicitly leverages a particular choice of RM parameterization that enables the extraction of its optimal policy in closed form, without a reinforcement learning training loop:
\#
\label{equ:DPO loss}
& \ell^{\text{DPO}}(\theta) = - \sum_{(x, a^+, a^-)} \log \sigma\left[\beta \log \frac{\pi_\theta({a}^+|x)}{\pi_0({a}^+|x)} - \beta \log \frac{\pi_\theta({a}^-|x)}{\pi_0({a}^-|x)}\right].
\#
As shown, DPO leverages logistic regression loss to directly
fine-tune the LLM on preference data. This approach, along with its various variants~\citep{zhao2023slic, amini2024direct, azar2024general}, is referred to as \emph{RM-free} alignment due to the elimination of an explicit RM.

\subsection{Underexplored Observations about DPO}
\label{subsec:Unexplored Facts about DPO and its Variants}

Though the absence of the need for additional RM training makes DPO particularly attractive, several observations remain underexplored. The most concerning issue is, to the best of our knowledge, few models using DPO (or other RM-free algorithm) have achieved performance comparable to the state-of-the-art closed-source LLMs such as OpenAI's GPT-4o or Anthropic's Claude, which reportedly use PPO methods during training. Besides, many other phenomena have been reported but lack comprehensive theoretical explanations. Here we make a summary for clarity.

\begin{observation}
\label{observation:likelihood decline}
During the vanilla DPO training, the likelihood of both the chosen and rejected responses in the preference datasets tends to decrease, whereas the likelihood of unseen tokens not appearing in the preference pairs tends to increase~\citep{pytheia_MM}.
\end{observation}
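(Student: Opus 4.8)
The plan is to derive Observation~\ref{observation:likelihood decline} from the gradient dynamics of $\ell^{\text{DPO}}$ rather than postulate it. Fix one preference triple $(x,a^+,a^-)$ (the analysis is per pair; summing over the dataset only adds cross-pair terms, which I would bound separately), write $g^+:=\nabla_\theta\log\pi_\theta(a^+\mid x)$ and $g^-:=\nabla_\theta\log\pi_\theta(a^-\mid x)$, and let $u:=\beta\log\frac{\pi_\theta(a^+\mid x)}{\pi_0(a^+\mid x)}-\beta\log\frac{\pi_\theta(a^-\mid x)}{\pi_0(a^-\mid x)}$ be the sigmoid argument in the DPO loss. Differentiating the corresponding summand gives $\nabla_\theta\ell^{\text{DPO}}=-\beta\,\sigma(-u)\,(g^+-g^-)$; the fixed reference $\pi_0$ enters only through the positive scalar $\sigma(-u)$ and hence cannot flip any sign below. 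Along gradient flow $\dot\theta=-\nabla_\theta\ell^{\text{DPO}}$ (a single GD step realizes this to first order in the step size) one then gets, exactly, $\frac{d}{dt}\log\pi_\theta(a^+\mid x)=\beta\sigma(-u)\,(\|g^+\|^2-\langle g^+,g^-\rangle)$ and $\frac{d}{dt}\log\pi_\theta(a^-\mid x)=\beta\sigma(-u)\,(\langle g^+,g^-\rangle-\|g^-\|^2)$. So the statement reduces entirely to the geometry of the pair $(g^+,g^-)$: $\log\pi_\theta(a^+\mid x)$ decreases iff $\|g^-\|\cos\angle(g^+,g^-)>\|g^+\|$, and $\log\pi_\theta(a^-\mid x)$ decreases iff $\|g^-\|>\|g^+\|\cos\angle(g^+,g^-)$; since $\cos\angle(g^+,g^-)\le 1$, the first (chosen) inequality already implies the second, so a single geometric condition forces both declines.

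I would pin down that condition through two ingredients. \emph{(i) Positive alignment}: $\langle g^+,g^-\rangle>0$, because $a^+$ and $a^-$ are completions of the \emph{same} prompt, typically sampled from (a neighborhood of) the same policy and sharing long common prefixes and many tokens, so the shared intermediate representations of the model make the two per-example gradients point in broadly the same direction --- the familiar picture that off-diagonal NTK entries are positive for similar inputs. I would make this exact in a log-linear surrogate (features fixed, only the readout trained), where $g^\pm=\sum_t J_t^\top(e_{a^{\pm}_{t}}-\pi_t)$ with Jacobians $J_t$ not depending on the current parameters, and corroborate it with the gradient-alignment measurements the paper reports on the toy model and on the LLM runs. \emph{(ii) Norm dominance of the rejected gradient}: $\|g^-\|$ is large relative to $\|g^+\|$, because in $g^\pm=\sum_t J_t^\top(e_{a^{\pm}_{t}}-\pi_t)$ the token residual $\|e_{a_t}-\pi_t\|$ grows as $\pi_\theta(a_t\mid\cdot)$ shrinks and $a^-$ is the lower-likelihood, dispreferred sequence, so its residuals and hence $\|g^-\|$ dominate --- increasingly so as training pushes $\pi_\theta(a^-\mid x)$ down, which makes the geometric condition self-reinforcing and the decline persistent rather than transient. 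Feeding (i) and (ii) into the two derivatives above gives $\frac{d}{dt}\log\pi_\theta(a^+\mid x)<0$ and $\frac{d}{dt}\log\pi_\theta(a^-\mid x)<0$, i.e. the ``both likelihoods drop'' half of the claim.

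For the dispersion half I would use pure conservation of mass: since $\sum_b\pi_\theta(b\mid x)=1$, the quantity $\sum_{b\notin\{a^+,a^-\}}\pi_\theta(b\mid x)=1-\pi_\theta(a^+\mid x)-\pi_\theta(a^-\mid x)$ strictly increases whenever both seen likelihoods fall, and specializing to a single decoding position with common prefix $c$ turns this into $\sum_{v\notin\{a^+_t,a^-_t\}}\pi_\theta(v\mid x,c)$ increasing --- exactly ``the likelihood of unseen tokens increases.'' As a sanity check I would verify that the \emph{naive} single-token softmax model, in which every logit is its own parameter so that $\nabla_{z_v}\ell^{\text{DPO}}=-\beta\sigma(-u)(\delta_{v,a^+}-\delta_{v,a^-})$, does \emph{not} reproduce the observation: there only $z_{a^+}$ and $z_{a^-}$ move, and a closed-form renormalization computation yields $\pi_\theta(a^+\mid x)\uparrow$, $\pi_\theta(a^-\mid x)\downarrow$ and unseen mass $\downarrow$. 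This foil pinpoints that the phenomenon is genuinely a parameter-/representation-sharing effect and explains why ingredients (i)--(ii), not a tabular argument, are the right tools.

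The main obstacle I anticipate is discharging (ii), and to a lesser extent (i), beyond a tractable surrogate: both are model- and data-dependent, and one cannot prove $\|g^-\|>\|g^+\|/\cos\angle(g^+,g^-)$ for arbitrary architectures and arbitrary pairs without assumptions on the network and on how much ``worse'' $a^-$ is than $a^+$. My remedy is to prove the clean implication ``(i) plus a quantified (ii) $\Rightarrow$ both likelihoods fall and probability mass disperses'' unconditionally, discharge (i) and (ii) exactly in the log-linear surrogate, and then appeal to the controlled toy-model experiments and the real-LLM runs in the paper to show the two conditions hold empirically --- so that Observation~\ref{observation:likelihood decline} reads as the \emph{predicted} behavior of DPO's optimization dynamics rather than a coincidence. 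As a bonus, the same two ingredients, read slightly differently, also generate the other two of the 3D-properties: large $\|g^-\|$ forces the drastic drop in $\pi_\theta(a^-\mid x)$, and the regime $\|g^-\|\gg\|g^+\|$ is precisely ``degradation into suppression,'' so this argument doubles as their skeleton.
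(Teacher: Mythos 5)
Your proposal is correct as a conditional argument, but it takes a genuinely different route from the paper. The paper explains this observation by differentiating the DPO loss directly with respect to the response probabilities $\pi^+=\pi_\theta(a^+|x)$ and $\pi^-=\pi_\theta(a^-|x)$ (Corollaries~\ref{corollary:1}--\ref{corollary:3}): the ratio of partial derivatives is $-\pi^+/\pi^-$, so as $\pi^-\to 0$ the gradient on $\pi^-$ blows up like $(\pi^-)^{\beta-1}$ while the gradient on $\pi^+$ vanishes like $(\pi^-)^{\beta}$; the decline of the \emph{chosen} likelihood is then argued informally (shared tokens and patterns drag $\pi^+$ down once its own gradient has died), dispersion follows from conservation of probability mass, and the whole picture is validated on the toy model and real LLM runs. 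You instead work in parameter space under gradient flow, getting $\frac{d}{dt}\log\pi^\pm$ in terms of $\|g^+\|$, $\|g^-\|$ and $\langle g^+,g^-\rangle$, and reduce the observation to one geometric condition ($\|g^-\|\cos\angle(g^+,g^-)>\|g^+\|$, which indeed implies the rejected-side decline by Cauchy--Schwarz). This buys a rigorous formalization of exactly the step the paper leaves heuristic --- the coupling between chosen and rejected responses through shared parameters --- and your tabular-softmax foil correctly isolates representation sharing as the essential mechanism. What the paper's coordinates buy, and your decomposition loses, is a loss-intrinsic source of asymmetry: in your formula the scalar $\beta\sigma(-u)$ multiplies $g^+$ and $g^-$ symmetrically (in log-probability coordinates the DPO pull on the two responses is equal and opposite), so the entire burden of the observation falls on your assumptions (i)--(ii), which you can only discharge in a log-linear surrogate or empirically; the paper's $(\pi^-)^{\beta-1}$ blow-up is instead automatic from the loss, at the cost of being a statement about probability-space partials rather than actual parameter updates. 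Two small caveats: the paper does not report the gradient-alignment inner products $\langle g^+,g^-\rangle$ you propose to ``corroborate with'' (it reports magnitudes of $\partial\ell^{\text{DPO}}/\partial\pi^\pm$), so you would need to measure those yourself; and in your tabular foil the unseen mass decreases only when $\pi(a^+)>\pi(a^-)$ (to first order the normalizer changes by $(\pi(a^+)-\pi(a^-))\epsilon$), though the foil's main point --- that $\pi(a^+)$ always rises there, contradicting the observation --- still stands.
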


\begin{observation}
\label{observation:sub-optimal}
Compared with RM-based alignment, the performance of DPO is relatively unstable and sub-optimal~\citep{wang2024comprehensive}.
\end{observation}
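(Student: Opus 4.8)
The plan is to substantiate Observation~\ref{observation:sub-optimal} by a gradient-level analysis of the DPO objective, a contrasting analysis of the RM-based objective in \eqref{eq:rl}, and empirical corroboration on both a toy model and real LLMs. First I would differentiate \eqref{equ:DPO loss}: for a single triple $(x,a^+,a^-)$,
\[
\nabla_\theta \ell^{\text{DPO}}(\theta) = -\beta\,\sigma\!\left[\beta\log\frac{\pi_\theta(a^-|x)}{\pi_0(a^-|x)} - \beta\log\frac{\pi_\theta(a^+|x)}{\pi_0(a^+|x)}\right]\Bigl(\nabla_\theta\log\pi_\theta(a^+|x) - \nabla_\theta\log\pi_\theta(a^-|x)\Bigr),
\]
so the updates to the chosen and rejected log-probabilities are yoked together by a single scalar weight. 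The structural observation I would press on is that $\pi_\theta(\cdot|x)$ lives on a simplex: lowering $\pi_\theta(a^-|x)$ must move mass somewhere, but the objective does not say where, so it need not flow to $a^+$ and can instead leak onto responses absent from the preference data — the Dispersion mechanism. Composed with the fact that at the token level a long or already-unlikely $a^+$ is hard to raise while $a^-$ is comparatively cheap to suppress, the same coupling explains the Drastic drop in rejected likelihood and the Degradation into blanket suppression.

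To turn ``unstable and sub-optimal'' into something quantitative, I would next isolate a minimal tabular instance — one fixed prompt, a small finite response set, and a softmax policy parameterized directly by logits — in which the DPO gradient flow can be integrated exactly or simulated cheaply. The target statement is that the mass $\pi_\theta(a^+|x)+\pi_\theta(a^-|x)$ on the two demonstrated responses is non-increasing (generically decreasing) along the trajectory, so that the implied Bradley--Terry margin $\beta\log(\pi_\theta(a^+|x)/\pi_0(a^+|x)) - \beta\log(\pi_\theta(a^-|x)/\pi_0(a^-|x))$ can be driven arbitrarily large while the policy assigns vanishing probability to \emph{both} completions it was trained on — i.e., the point DPO converges to is far from the policy a reward model would actually score highly.

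I would then contrast this with \eqref{eq:rl}: its policy gradient reweights only responses the \emph{current} policy samples, and the explicit $\beta\log(\pi_\theta/\pi_0)$ term anchors the full distribution to $\pi_0$, so probability cannot silently escape to unseen responses and the chosen/rejected likelihoods are not tied by a shared multiplier. Hence RM-based PPO structurally sidesteps the 3D-properties, which is the theoretical half of ``DPO is sub-optimal relative to RM-based alignment''. I would close the loop empirically by running DPO and PPO from the same $\pi_0$ on identical preference data, on the toy model and on mathematical-reasoning and instruction-following tasks, tracking $\log\pi_\theta(a^+|x)$, $\log\pi_\theta(a^-|x)$, the probability mass on held-out responses, and downstream accuracy.

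The main obstacle is the second step. ``Unstable and sub-optimal'' is informal, so any theorem requires committing to a toy model whose assumptions — tabular policy, bounded response set, full-batch gradient flow — are strong enough to permit an exact trajectory analysis yet still recognizably reproduce the phenomenon; obtaining a clean monotonicity statement for $\pi_\theta(a^+|x)+\pi_\theta(a^-|x)$ without it degenerating into a triviality, and then arguing the mechanism persists in the over-parameterized, autoregressive, finite-step-size regime of real LLMs, is where most of the effort goes and is precisely why the empirical component is indispensable.
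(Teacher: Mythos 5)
Your overall strategy (gradient analysis of DPO, a tabular toy model, and an empirical DPO-versus-RM-based comparison from the same $\pi_0$) is the same shape as the paper's, but the two load-bearing theoretical steps differ, and one of them is a real gap. Your DPO gradient is written in parameter space, where $\nabla_\theta\log\pi_\theta(a^+|x)$ and $\nabla_\theta\log\pi_\theta(a^-|x)$ carry the \emph{same} sigmoid scalar; that expression is symmetric between chosen and rejected and does not by itself produce the instability you need, which is why you have to fall back on a token-level ``cheap to suppress'' heuristic and an unproven monotonicity conjecture for $\pi_\theta(a^+|x)+\pi_\theta(a^-|x)$ --- the step you yourself identify as the main obstacle. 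The paper sidesteps that obstacle entirely by differentiating with respect to the response probabilities $\pi^\pm$ themselves: $\frac{\partial \ell^{\text{DPO}}}{\partial \pi^-} \big/ \frac{\partial \ell^{\text{DPO}}}{\partial \pi^+} = -\pi^+/\pi^-$, and as $\pi^-\to 0$ with $\beta<1$ one gets $\frac{\partial \ell^{\text{DPO}}}{\partial \pi^+}\propto(\pi^-)^{\beta}\to 0$ while $\frac{\partial \ell^{\text{DPO}}}{\partial \pi^-}\propto(\pi^-)^{\beta-1}\to\infty$ (Corollaries~\ref{corollary:1}--\ref{corollary:3}). This elementary asymmetry is the entire engine behind the 3D-properties and explains Observation~\ref{observation:sub-optimal} without any exact trajectory or monotonicity theorem; the toy model in Section~\ref{subsec:Synthetic validation in a toy model} is used only as illustration, not as a proved lemma.

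The second divergence is which part of the RM-based pipeline you contrast against. You argue about the RL phase of equation~\ref{eq:rl} (on-policy sampling plus the KL anchor preventing mass leakage); the paper deliberately does \emph{not} analyze PPO dynamics, but instead analyzes the reward-model fitting loss in equation~\ref{eq:mle}, justified via Best-of-N as a proxy for the final policy, and shows $\frac{\partial \ell^{\text{RM}}}{\partial r^+} = -\frac{1}{1+e^{(r^+-r^-)}}$ and $\frac{\partial \ell^{\text{RM}}}{\partial r^-} = +\frac{1}{1+e^{(r^+-r^-)}}$: balanced and bounded, with no $1/\pi$ blow-up, hence no 3D-properties in RM training (Section~\ref{sec:Non-existence of 3D-properties in reward-based alignment}). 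Your PPO-dynamics argument is plausible but heuristic, and it does not account for the paper's actual instability experiment, which compares DPO training against RM training (not PPO) on identical data (Section~\ref{subsec:compare_to_rm}); the DPO-versus-PPO endpoint comparison appears separately on the poem/slogan tasks (Section~\ref{subsec:Validation on Reward-free algorithm is Sub-optimal to Reward-based algorithm}). Your empirical plan otherwise matches the paper's; to close the gap, replace the monotonicity conjecture with the probability-space gradient asymmetry and direct the theoretical contrast at the RM objective rather than the PPO update.
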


\begin{observation}
\label{observation:onpolicy or offpolicy}
The performance of DPO is significantly affected by the distribution shift between the model outputs and the preference dataset. In general, on-policy DPO, where both the chosen responses and the rejected responses are sampled from the policy model $\pi_\theta$, outperforms other scenarios~\citep{tang2024understanding}.
\end{observation}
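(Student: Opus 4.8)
Observation~\ref{observation:onpolicy or offpolicy} is a statement about learning behaviour rather than an algebraic identity, so what I would actually establish is the structural claim that \emph{explains} it: starting from the DPO objective \eqref{equ:DPO loss}, its optimization dynamics amplify the instabilities of Observation~\ref{observation:likelihood decline} (and the 3D-properties generally) when the preference pairs $(a^+,a^-)$ are off-policy, and damp them when the pairs are on-policy; combined with the paper's link between the 3D-properties and degraded generation (which also underlies Observation~\ref{observation:sub-optimal}), this yields the claimed ordering. Fix a prompt $x$, write $\hat r_\theta(x,a)=\beta\log\frac{\pi_\theta(a|x)}{\pi_0(a|x)}$ and $u=\hat r_\theta(x,a^+)-\hat r_\theta(x,a^-)$. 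The gradient-descent step on \eqref{equ:DPO loss} for this example is
\[
\Delta\theta \;=\; \eta\,\beta\,\sigma(-u)\,\bigl[\nabla_\theta\log\pi_\theta(a^+|x)-\nabla_\theta\log\pi_\theta(a^-|x)\bigr],
\]
so to first order the induced change in the log-likelihood of an \emph{arbitrary} response $a$ is $\Delta\log\pi_\theta(a|x)\approx\eta\beta\sigma(-u)\bigl[G(a,a^+)-G(a,a^-)\bigr]$, where $G(a,a')=\langle\nabla_\theta\log\pi_\theta(a|x),\nabla_\theta\log\pi_\theta(a'|x)\rangle$ is the Gram/NTK kernel at $\theta$.

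First I would record the two structural facts that make dispersion and suppression possible at all: the score identity $\sum_a\pi_\theta(a|x)\,\nabla_\theta\log\pi_\theta(a|x)=0$, so that any push-down of $\pi_\theta(a^-|x)$ must be redistributed over other responses (the dispersion mechanism); and the invariance of $u$ under a common additive shift of $\log\pi_\theta(a^+|x)$ and $\log\pi_\theta(a^-|x)$, so that the ``common mode'' of the chosen/rejected likelihoods — hence the absolute level they settle at — is not constrained by \eqref{equ:DPO loss} at all, only by the dynamics. Next I would split $\Delta\log\pi_\theta$ over the three groups $\{a^+\}$, $\{a^-\}$ and everything else, and bound the leakage onto the third group by the off-support overlap encoded in $G$. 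The core comparison is then: off-policy, the pair need not lie near the bulk of $\pi_\theta(\cdot|x)$, so $\nabla_\theta\log\pi_\theta(a^\pm|x)$ point toward low-probability regions, the contrastive direction has a large component orthogonal to the span of the currently-probable responses, the freed mass leaks broadly onto unseen responses, and the unconstrained common mode is free to drift down — the 3D-properties are maximally excited; on-policy, $a^+,a^-\sim\pi_\theta(\cdot|x)$, so the two responses carry comparable likelihood, the weight $\sigma(-u)$ and the two gradient norms stay balanced, $G(a^+,a^-)$ is large (shared prompt, prefixes and tokens) so the contrastive direction is well conditioned, and the leakage onto the bulk is $O(\eta)$-controlled. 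I would then upgrade this one-step picture to a trajectory argument: on-policy, the next iteration resamples pairs from the \emph{updated} policy, so any dispersed mass is re-exposed and corrected — a negative feedback that is absent in the off-policy loop, where the dataset is frozen and the drift accumulates.

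The step I expect to be the main obstacle is controlling the kernel $G(\cdot,\cdot)$ for a genuine LLM: the correlation between $\nabla_\theta\log\pi_\theta$ at different responses has no closed form, yet every quantitative phrase above — ``well conditioned'', ``leakage $O(\eta)$-controlled'', ``large orthogonal component'' — is a statement about $G$. I would discharge this by first proving the clean version in the linearized / neural-tangent-kernel regime, where $G$ is frozen and the bounds become genuine inequalities; then corroborating it on the toy model the paper introduces; and finally flagging the asymmetry the argument predicts — making $a^-$ rare under $\pi_\theta$ hurts far more than making $a^+$ rare — which doubles as an explanation for why on-policy-chosen-only variants help. Alongside this I would include a complementary, purely qualitative strand: because \eqref{equ:DPO loss} pins down only $u$ on the observed pairs, its minimizer is wildly under-determined off the data support, and on-policy pairs are precisely the ones that cover where $\pi_\theta$ places its mass, so they constrain the part of the policy that actually governs generation, whereas off-policy pairs leave it free to degrade along the directions exhibited above.
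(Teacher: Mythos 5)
Your proposal takes a genuinely different route from the paper, and in its present form it has a real gap. The paper's explanation of this observation is much more elementary: it differentiates the DPO loss directly with respect to the sequence likelihoods $\pi^+=\pi_\theta(a^+|x)$ and $\pi^-=\pi_\theta(a^-|x)$, obtains the ratio $\frac{\partial \ell^{\text{DPO}}}{\partial \pi^-}\big/\frac{\partial \ell^{\text{DPO}}}{\partial \pi^+}=-\pi^+/\pi^-$ and the limits $\frac{\partial \ell^{\text{DPO}}}{\partial \pi^-}\propto(\pi^-)^{\beta-1}\to\infty$, $\frac{\partial \ell^{\text{DPO}}}{\partial \pi^+}\propto(\pi^-)^{\beta}\to 0$ as $\pi^-\to 0$ (Corollaries~\ref{corollary:1}--\ref{corollary:3}), and then concludes via Proposition~\ref{proposition: moderate decline rate} that the severity of the 3D-properties is governed by how small $\pi^-$ is and how fast it falls. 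On-policy data is then favored for a very simple reason: responses sampled from the policy start with \emph{higher} likelihood under $\pi_\theta$, so $\pi^-$ takes longer to collapse and the drastic-drop/degradation/dispersion regime is delayed. This is exactly what the toy-model scenarios (initial likelihood $0.12$ vs.\ $0.02$) and the measured decline rates of $\log\pi(a^-)$ in the four real-LLM scenarios test. Your kernel/NTK decomposition $\Delta\log\pi_\theta(a|x)\approx\eta\beta\sigma(-u)[G(a,a^+)-G(a,a^-)]$ and the score identity are a legitimate alternative lens on dispersion, but the load-bearing steps --- ``the contrastive direction is well conditioned,'' ``leakage is $O(\eta)$-controlled,'' ``off-policy gradients have a large orthogonal component'' --- are precisely the statements you admit you cannot establish for a real LLM, and deferring them to an NTK regime leaves the claimed ordering of the four scenarios unproved rather than proved by a different method. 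Notably, your argument never uses the one quantity the paper shows is decisive: the magnitude of $\pi^-$ itself and the $(\pi^-)^{\beta-1}$ divergence of its gradient.

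A second concrete mismatch: your trajectory/negative-feedback argument (``on-policy, the next iteration resamples pairs from the updated policy, so dispersed mass is re-exposed and corrected'') describes iterative or online DPO, not the setting of this observation or of the paper's experiments. In Scenario~1 the preference pairs are sampled \emph{once} from the policy model before training and the dataset is frozen thereafter, exactly as in the off-policy scenarios; the advantage therefore cannot come from re-sampling feedback, and an explanation that relies on it would not account for the paper's Scenario-1 results or for the toy-model version (where ``on-policy'' is implemented purely as a higher initial likelihood). If you drop the resampling strand and replace the uncontrolled kernel bounds with the paper's direct $\partial\ell/\partial\pi^\pm$ computation --- keeping your correct observations about the score identity, the unconstrained common mode, and the asymmetry that a rare $a^-$ is far more damaging than a rare $a^+$ --- you recover the paper's argument; as written, the proposal identifies the right qualitative picture but does not close it.
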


\subsection{Comparison with Related Contemporary Studies}
\label{subsec:comparison with related contemporary studies}

Several concurrent studies attempt to explain these observations, and we highlight our differences to underscore our contributions. For Observation~\ref{observation:likelihood decline}, \citet{feng2024towards} shares similar points of view regarding the degradation on gradients but offers limited analysis and lacks experimental validation on real LLMs. In contrast, our work offers a more rigorous and comprehensive analysis supported by theoretical insights and validation across toy models and large-scale real-world LLM experiments. For Observation~\ref{observation:sub-optimal}, \citet{xu2024dpo} point out that the policy minimizing the PPO loss is a subset of that minimizing the DPO loss, which offers a partial explanation. However, their analysis focuses solely on the endpoint of the optimization and does not examine the dynamic process by which the policy evolves. It leaves unaddressed how unexpected policies emerge during training. In contrast, our gradient analysis offers a comprehensive understanding of the entire optimization trajectory, shedding light on how and why sub-optimal policies arise throughout the DPO training process. For Observation~\ref{observation:onpolicy or offpolicy}, \citet{tang2024understanding} investigate the performance gap between on-policy and off-policy alignment algorithms from an empirical perspective, while our insights are rooted in theoretical findings.

Our work advances the understanding of these observations, providing critical insights into the underlying mechanisms and reinforcing the findings of these concurrent studies. In the following section, we will present our theoretical explanations for these observations.

\section{Fundamental Limitations of Vanilla DPO: 3D-Properties}
\label{sec:3d_property}

We first identify a critical flaw inherent in vanilla DPO. At first glance, the loss function of vanilla DPO, as defined in Eq.~(\ref{equ:DPO loss}), appears to be composed of two parts: the term $\log \frac{\pi_\theta(a^+|x)}{\pi_0(a^+|x)}$ aims to increase the likelihood of chosen responses, while the term $\log \frac{\pi_\theta(a^-|x)}{\pi_0(a^-|x)}$ seeks to decrease the likelihood of rejected responses. However, this seemingly straightforward interpretation overlooks significant underlying issues, which we characterize through the 3D-properties of vanilla DPO.

\begin{property} [\textbf{D}rastic drop in rejected response likelihood]
\label{property: drastic drop in rejected response likelihood}
The likelihood of a rejected response tends to shift much more rapidly than that of a chosen response.
\end{property}

\begin{property} [\textbf{D}egradation into response suppression]
\label{property: degradation into LLM unlearning}
As optimization progresses, DPO gradually loses its ability to steer the direction of optimizing chosen responses and instead devolves into merely suppressing the rejected responses.
\end{property}

\begin{property} [\textbf{D}ispersion effect on unseen responses]
\label{property: dispersion effect on unseen responses}
As DPO training progresses, the likelihood of both chosen and rejected responses gradually decreases, while the likelihood of generating out-of-distribution (OOD) responses increases.
\end{property}

These properties are non-trivial, as they reveal inherent challenges in DPO’s optimization process that are not immediately apparent from the loss function alone. These phenomena closely align with the empirical observations we discussed earlier, pointing to the structural limitations of DPO. In the following section, we delve into a theoretical analysis to further explain the origins of these 3D-properties and provide insights into their impact on the optimization trajectory.

\subsection{Theoretical Foundations}
\label{subsec:Theoretical Foundation}
In this sections, we provide the theoretical foundations for the 3D-properties, followed by detailed explanations of the observations discussed in Section~\ref{subsec:Unexplored Facts about DPO and its Variants}. The loss function for DPO as shown in Eq.~(\ref{equ:DPO loss}) can be re-written by:
\#
\ell^{\text{DPO}}(\theta) = \sum_{(x, a^+, a^-)} \log \left(1+ \left( \frac{\pi_0(a^+|x)}{\pi_0(a^-|x)} \cdot \frac{\pi_\theta(a^-|x)}{\pi_\theta(a^+|x)} \right)^{\beta} \right).
\#
For a given triple $(x, a^+, a^-)$, let
$$\alpha := \left(\frac{\pi_0(a^+|x)}{\pi_0(a^-|x)}\right)^\beta, \quad \pi^+ := \pi_\theta(a^+|x), \quad \pi^- := \pi_\theta(a^-|x), \quad z := \frac{\pi_\theta(a^-|x)}{\pi_\theta(a^+|x)} = \frac{\pi^-}{\pi^+}.$$
Then we have
$$\frac{\partial \ell^{\text{DPO}}}{\partial \pi^+} = \frac{\partial \log(1+\alpha z^\beta)}{\partial z} \frac{\partial z}{\partial \pi^+} = \frac{\alpha \beta}{1+\alpha z^\beta} z^{\beta-1} \frac{\partial z}{\partial \pi^+} = \frac{\alpha \beta}{1+\alpha z^\beta} z^{\beta-1} \left[ - \frac{\pi^-}{(\pi^+)^2}\right],$$
$$\frac{\partial \ell^{\text{DPO}}}{\partial \pi^-} = \frac{\partial \log(1+\alpha z^\beta)}{\partial z} \frac{\partial z}{\partial \pi^-} = \frac{\alpha \beta}{1+\alpha z^\beta} z^{\beta-1} \frac{\partial z}{\partial \pi^-} = \frac{\alpha \beta}{1+\alpha z^\beta} z^{\beta-1} \left( \frac{1}{\pi^+}\right),$$
with these simplified forms, we can obtain the following corollaries to explain the properties above:

\begin{corollary} [Explanation for Property~\ref{property: drastic drop in rejected response likelihood}] 
\label{corollary:1}
The ratio of the gradient with respect to the rejected response likelihood $\pi^-$ to the gradient with respect to the chosen response likelihood $\pi^+$ is equal to the ratio of $\pi^+$ to $\pi^-$:  $$ \frac{\partial \ell^{\text{DPO}}}{\partial \pi^-} /  \frac{\partial \ell^{\text{DPO}}}{\partial \pi^+} = \frac{\partial z}{\partial \pi^-} / \frac{\partial z}{\partial \pi^+} = - \frac{\pi^+}{\pi^-},$$
which indicates that as $\pi^+$ increases and $\pi^-$ decreases, the gradient with respect to $\pi^-$ grows faster, leading to a more rapid decline in the likelihood of the rejected response.
\end{corollary}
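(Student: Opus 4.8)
The plan is to derive the stated ratio directly from the two partial-derivative expressions already computed immediately above the corollary, and then interpret the algebraic identity dynamically. Since we are given
$$\frac{\partial \ell^{\text{DPO}}}{\partial \pi^+} = \frac{\alpha \beta}{1+\alpha z^\beta} z^{\beta-1} \left[ - \frac{\pi^-}{(\pi^+)^2}\right], \qquad \frac{\partial \ell^{\text{DPO}}}{\partial \pi^-} = \frac{\alpha \beta}{1+\alpha z^\beta} z^{\beta-1} \left( \frac{1}{\pi^+}\right),$$
the common prefactor $\frac{\alpha \beta}{1+\alpha z^\beta} z^{\beta-1}$ is strictly positive (as $\alpha,\beta>0$ and $z=\pi^-/\pi^+>0$), so it is nonzero and cancels cleanly when we form the quotient. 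First I would write the quotient of the two displayed expressions, cancel the shared positive prefactor, and simplify $\left(\frac{1}{\pi^+}\right) \big/ \left(-\frac{\pi^-}{(\pi^+)^2}\right) = -\frac{\pi^+}{\pi^-}$. This establishes the equality with the middle term $\frac{\partial z}{\partial \pi^-} \big/ \frac{\partial z}{\partial \pi^+}$ as well, since by the chain rule both partials of $\ell^{\text{DPO}}$ are that same positive prefactor times the corresponding partial of $z$, and $\frac{\partial z}{\partial \pi^+} = -\pi^-/(\pi^+)^2$, $\frac{\partial z}{\partial \pi^-} = 1/\pi^+$.

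Next I would supply the dynamical interpretation that the corollary claims. Under gradient descent, $\pi^+$ is pushed up and $\pi^-$ is pushed down, so over the course of training we expect $\pi^+$ to grow relative to $\pi^-$, i.e. $\pi^+/\pi^- \to$ large (equivalently $z \to 0$). The identity then says the magnitude of the update applied to $\pi^-$ exceeds that applied to $\pi^+$ by exactly the factor $\pi^+/\pi^-$, which is large and growing. Hence the rejected-response likelihood is driven down much faster than the chosen-response likelihood is driven up — precisely Property~\ref{property: drastic drop in rejected response likelihood}. I would phrase this as a short remark rather than a formal sub-claim, since the precise trajectory of $(\pi^+,\pi^-)$ depends on the learning rate and the rest of the model, but the relative-magnitude comparison is an exact consequence of the identity at every step.

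There is essentially no obstacle here: the corollary is a one-line algebraic consequence of formulas the paper has already derived, and the only thing to be careful about is noting that the common prefactor is genuinely nonzero (so the division is legitimate) and getting the sign right — the ratio is negative because the two gradients point in opposite directions, which is exactly what one wants since $\ell^{\text{DPO}}$ should decrease $\pi^-$ while increasing $\pi^+$. The one place where a reader might want more is the leap from "gradient ratio" to "rate of change of likelihood ratio"; if I wanted to be fully rigorous I would observe that for a single shared scalar step size the actual increments $\Delta\pi^+$ and $\Delta\pi^-$ inherit the same ratio $-\pi^+/\pi^-$ to first order, but in a real neural network $\pi^+$ and $\pi^-$ are coupled through $\theta$, so I would keep this at the level of an informal-but-correct heuristic, consistent with how Property~\ref{property: drastic drop in rejected response likelihood} is stated ("tends to shift much more rapidly").
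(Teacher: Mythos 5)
Your proposal is correct and follows essentially the same route as the paper: both take the quotient of the two displayed partial derivatives, cancel the common positive prefactor $\frac{\alpha\beta}{1+\alpha z^{\beta}}z^{\beta-1}$ so the ratio reduces to $\bigl(\partial z/\partial \pi^-\bigr)\big/\bigl(\partial z/\partial \pi^+\bigr)=-\pi^+/\pi^-$, and then read off the dynamical interpretation. Your added remarks on the nonvanishing of the prefactor and on the heuristic nature of the ``faster decline'' claim are consistent with, and slightly more careful than, the paper's presentation.
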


\begin{corollary} [Explanation for Property~\ref{property: degradation into LLM unlearning}] 
\label{corollary:2} 
As $\pi^- \to 0$, we have $z \to 0$ and $\frac{\alpha\beta}{1+\alpha z^\beta} \to \alpha\beta$. Thus,
$$\frac{\partial \ell^{DPO}}{\partial \pi^+} \to -\alpha\beta (\pi^+)^{-\beta-1}(\pi^-)^\beta \to 0, \quad \frac{\partial \ell^{DPO}}{\partial \pi^-} \to \alpha\beta (\pi^+)^{-\beta}(\pi^-)^{\beta-1} \to \infty,$$
given that $\beta < 1$ and $\pi^- \to 0$. This creates a dynamic where the gradient for the rejected response grows exceedingly large, while the gradient for the chosen response diminishes significantly. As a result, DPO progressively shifts its focus to suppressing the rejected responses and loses the ability to steer the direction of optimizing chosen responses.
\end{corollary}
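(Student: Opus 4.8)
The plan is to work directly from the two closed-form expressions for $\partial \ell^{\text{DPO}}/\partial \pi^+$ and $\partial \ell^{\text{DPO}}/\partial \pi^-$ derived immediately above the corollary, and to track their behavior as $\pi^- \to 0$ while holding the triple $(x, a^+, a^-)$ fixed (so $\alpha$ is a fixed positive constant) and keeping $\pi^+$ bounded away from $0$. First I would substitute $z = \pi^-/\pi^+$ and collect powers: since $z^{\beta-1}\cdot \pi^-/(\pi^+)^2 = (\pi^-)^{\beta-1}(\pi^+)^{1-\beta}\cdot \pi^- (\pi^+)^{-2} = (\pi^+)^{-\beta-1}(\pi^-)^{\beta}$, and likewise $z^{\beta-1}\cdot(\pi^+)^{-1} = (\pi^+)^{-\beta}(\pi^-)^{\beta-1}$, the two gradients become $\partial_{\pi^+}\ell^{\text{DPO}} = -\tfrac{\alpha\beta}{1+\alpha z^\beta}(\pi^+)^{-\beta-1}(\pi^-)^{\beta}$ and $\partial_{\pi^-}\ell^{\text{DPO}} = \tfrac{\alpha\beta}{1+\alpha z^\beta}(\pi^+)^{-\beta}(\pi^-)^{\beta-1}$.

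Next I would dispose of the common prefactor. As $\pi^- \to 0$ with $\pi^+$ bounded below, $z = \pi^-/\pi^+ \to 0$, and because $\beta>0$ this gives $z^\beta \to 0$, hence $\tfrac{\alpha\beta}{1+\alpha z^\beta} \to \alpha\beta$; in particular this factor stays in a bounded positive range along the limit. This reduces the claim to the elementary asymptotics of $(\pi^+)^{-\beta-1}(\pi^-)^{\beta}$ and $(\pi^+)^{-\beta}(\pi^-)^{\beta-1}$, which I read off from the exponent of $\pi^-$: since $0<\beta<1$, the exponent $\beta$ is positive so $(\pi^-)^\beta \to 0$, forcing $\partial_{\pi^+}\ell^{\text{DPO}} \to 0$, whereas the exponent $\beta-1$ is negative so $(\pi^-)^{\beta-1}\to\infty$, forcing $\partial_{\pi^-}\ell^{\text{DPO}} \to \infty$. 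As a cross-check I would also note that the ratio $\lvert \partial_{\pi^-}\ell^{\text{DPO}} / \partial_{\pi^+}\ell^{\text{DPO}} \rvert = \pi^+/\pi^- \to \infty$, recovering Corollary~\ref{corollary:1} and showing the rejected-side gradient dominates independently of the prefactor, so that the explicit power laws serve only to pin down the two individual limits.

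The main obstacle here is not any hard computation but making the limiting regime precise: the conclusion is only meaningful if $\pi^+$ does not collapse to $0$ at least as fast as $\pi^-$, so I would state explicitly the assumption $\pi^+ \ge c > 0$ (or, more generally, $\pi^-/\pi^+ \to 0$), and argue this is the natural regime, since Property~\ref{property: drastic drop in rejected response likelihood} and Corollary~\ref{corollary:1} predict exactly that $\pi^-$ is driven down far faster than $\pi^+$ moves. A secondary point I would flag is the essential use of $\beta<1$: for $\beta \ge 1$ the factor $(\pi^-)^{\beta-1}$ would remain bounded (or vanish), so the rejected-side gradient would not blow up, and the "degradation into suppression" phenomenon is genuinely tied to the small-$\beta$ regime used in practice. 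With the regime fixed and $\beta<1$ invoked, the two limits follow from the substitution and the single observation $z^\beta\to 0$, and the qualitative conclusion about DPO losing its grip on the chosen response while its gradient on the rejected response diverges is immediate.
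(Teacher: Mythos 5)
Your proposal is correct and follows essentially the same route as the paper: substitute $z=\pi^-/\pi^+$ into the gradient formulas, note $\tfrac{\alpha\beta}{1+\alpha z^\beta}\to\alpha\beta$ as $\pi^-\to 0$, and read the two limits off the exponents $(\pi^-)^\beta$ and $(\pi^-)^{\beta-1}$ using $\beta<1$. Your explicit statement of the regime ($\pi^+$ bounded away from $0$, or $z\to 0$) and the remark on the necessity of $\beta<1$ are sensible clarifications of assumptions the paper leaves implicit, but they do not change the argument.
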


\begin{corollary} [Explanation for Property~\ref{property: dispersion effect on unseen responses}] 
\label{corollary:3}
When $\pi^-$ drastically drops to $0$, the gradient on $\pi^+$ fails and the likelihood of the chosen response is likely to decrease along with the rejected response as they often share many similar tokens and patterns. The constancy of the sum of probabilities implies that as both $\pi^+$ and $\pi^-$ decrease, the likelihood will randomly disperse into other unseen responses out of the preference dataset.
\end{corollary}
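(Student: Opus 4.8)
The plan is to establish Corollary~\ref{corollary:3} in two stages: first rigorously justifying the ``dispersion'' mechanism from the gradient structure already derived, and then formalizing the ``constancy of the sum of probabilities'' claim that drives the conclusion. I would begin by invoking Corollary~\ref{corollary:2}: once $\pi^- \to 0$ with $\beta < 1$, the gradient component $\partial \ell^{\text{DPO}}/\partial \pi^+$ vanishes like $(\pi^-)^\beta$, so the only surviving first-order signal is the (exploding) push-down on $\pi^-$. Since $a^+$ and $a^-$ share a large fraction of token-level prefixes and substructures, I would make precise the coupling between the two likelihoods by writing $\pi^\pm = \prod_t \pi_\theta(a^\pm_t \mid x, a^\pm_{1:t-1})$ and noting that a gradient step suppressing $\pi_\theta(a^-_t \mid \cdot)$ at a shared token position necessarily lowers the corresponding factor of $\pi^+$ as well; formalizing this as a lower bound on the (negative) inner product between $\nabla_\theta \log \pi^+$ and $\nabla_\theta \log \pi^-$ under a shared-prefix assumption is the technical heart of this step.

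Next I would address the normalization constraint. For each prefix $x, a_{1:t-1}$ the autoregressive conditional satisfies $\sum_{v \in \mathcal{V}} \pi_\theta(v \mid x, a_{1:t-1}) = 1$, so any decrease in probability mass on the tokens realized by $a^+$ and $a^-$ must be compensated by an increase in mass on the complementary set of tokens --- precisely the ``unseen'' continuations outside the preference pair. I would push this one step further to full sequences: letting $\mathcal{O}$ denote the set of responses that diverge from both $a^+$ and $a^-$ at some position, a telescoping argument over token positions shows $\sum_{a \in \mathcal{O}} \pi_\theta(a \mid x) = 1 - (\text{mass on the shared prefixes that the DPO step is actively shrinking})$, hence this OOD mass is monotonically increasing along the trajectory once the dynamics have entered the $\pi^- \approx 0$ regime. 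Because the DPO gradient in this regime carries no directional information about where within $\mathcal{O}$ the mass should go (it only removes mass from $a^+, a^-$), the redistribution is governed by the model's prior geometry rather than the preference signal --- this is what I would formalize as the claim that the dispersion is ``random'' / undirected.

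I would then combine the two stages: the gradient analysis gives $\pi^+ \downarrow$ alongside $\pi^- \downarrow 0$ (Stage 1), the normalization identity converts this into $\sum_{a \in \mathcal{O}} \pi_\theta(a\mid x) \uparrow$ (Stage 2), and the absence of a corrective term in $\partial \ell^{\text{DPO}}/\partial \theta$ restricted to $\mathcal{O}$ gives the undirected character of the increase. To make the ``both decrease'' conclusion airtight rather than heuristic, I expect I would need a mild regularity hypothesis --- e.g. that the shared-token Jacobian blocks are not exactly cancelled by the non-shared blocks, or equivalently that the cosine similarity between $\nabla_\theta \log \pi^+$ and $\nabla_\theta \log \pi^-$ stays bounded below by a positive constant along the relevant portion of training --- and I would state this explicitly as an assumption.

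The main obstacle will be Stage 1: turning the informal phrase ``they often share many similar tokens and patterns'' into a quantitative statement strong enough to force $\pi^+$ downward. In the worst case the non-shared tokens of $a^+$ could have gradients that exactly offset the suppression coming through the shared tokens, so the result genuinely requires an assumption on the gradient alignment (or on the fraction of shared prefix length), and the honest version of the corollary is a conditional statement. A secondary, more minor difficulty is that normalization is enforced per-token, not per-sequence, so the sequence-level ``sum of probabilities is constant'' statement needs the telescoping argument above to be stated carefully rather than asserted; I anticipate this is routine once the per-token identity is in hand.
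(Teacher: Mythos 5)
Your proposal follows essentially the same route as the paper: the vanishing of $\partial \ell^{\text{DPO}}/\partial \pi^+$ from Corollary~\ref{corollary:2}, the shared-token coupling that drags $\pi^+$ down together with $\pi^-$, and the normalization of probabilities which forces the lost mass onto responses outside the preference pair. The paper itself leaves this corollary at the heuristic level (backing it with the toy-model dynamics and the token-level gradient computation in the appendix rather than a formal argument), so your explicit gradient-alignment assumption and per-token telescoping treatment of normalization are a faithful, and in fact more careful, formalization of the same idea.
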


Based on these theoretical insights, we can further explore and explain the observations discussed in Section~\ref{subsec:Unexplored Facts about DPO and its Variants}. Observation~\ref{observation:likelihood decline} is directly explained by Corollary~\ref{corollary:3}. For Observation~\ref{observation:sub-optimal}, we will show that 3D-properties do not manifest during the RM training process in the RM-based alignment pipeline. Regarding Observation~\ref{observation:onpolicy or offpolicy}, we will demonstrate that the distribution gap between the LLM's original outputs and the preference dataset plays a crucial role in determining the influence of the 3D-properties. The impact of these properties is notably less pronounced in on-policy DPO, where the preference dataset is sampled directly from the policy model's outputs. In the following sections, we will delve deeper into each of these statements and provide empirical validation.

\subsection{Synthetic Validation with a Toy Model}
\label{subsec:Synthetic validation in a toy model}

\begin{figure}[t]
  \centering
  \includegraphics[width=0.9\columnwidth]{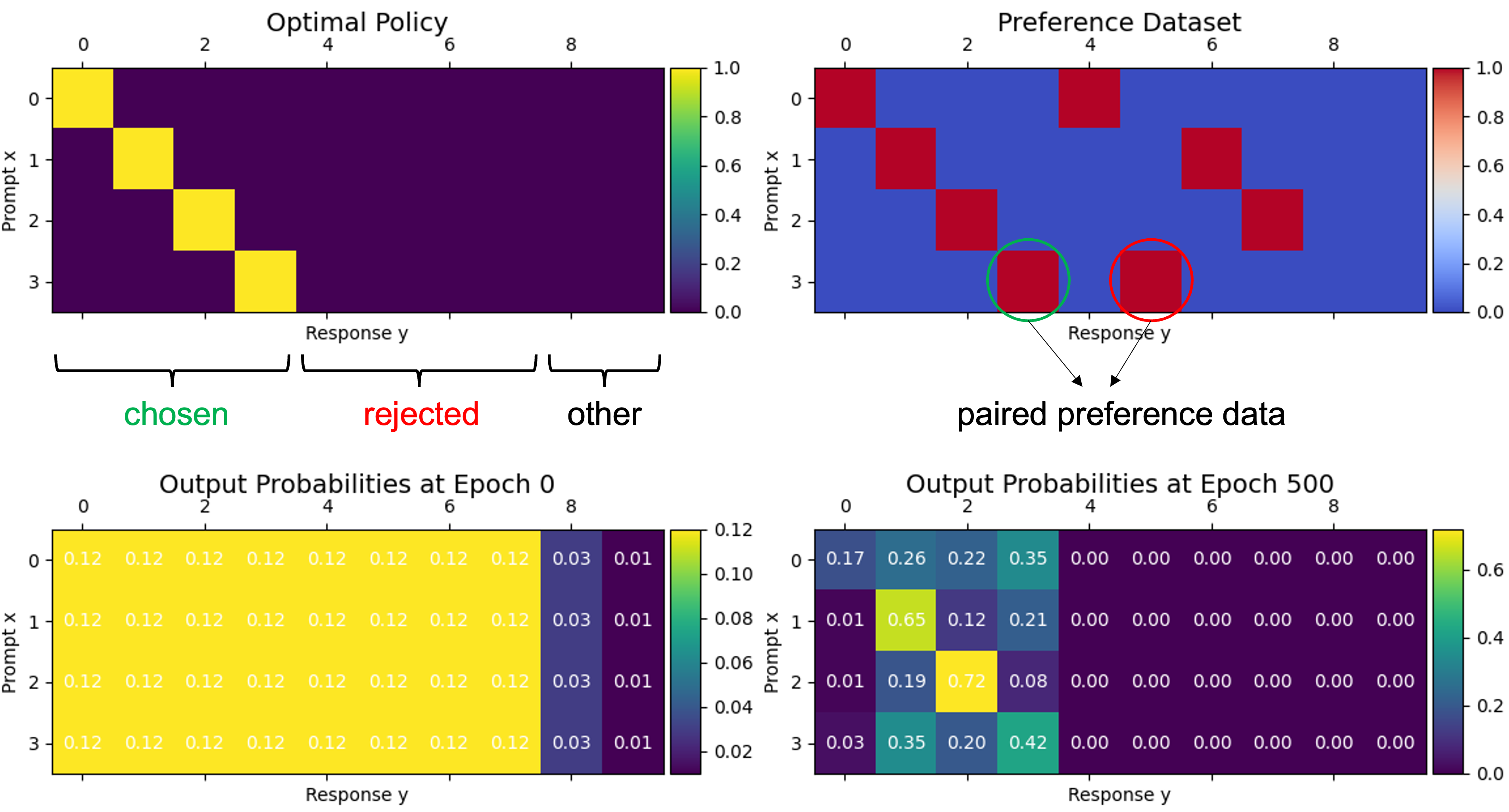}
  \caption{Toy model setup. Top left: the optimal policy where the highlighted blocks represent optimal responses. Top right: preference dataset construction. Lower left: the initialization of the SFT model. Lower right: policy output after DPO training.}
  \label{fig:toy model diagram}
  \vspace{-12pt}
\end{figure}


\begin{figure}[t]
  \centering
  \begin{subfigure}[t]{0.32\textwidth}
    \centering
    \includegraphics[width=\linewidth]{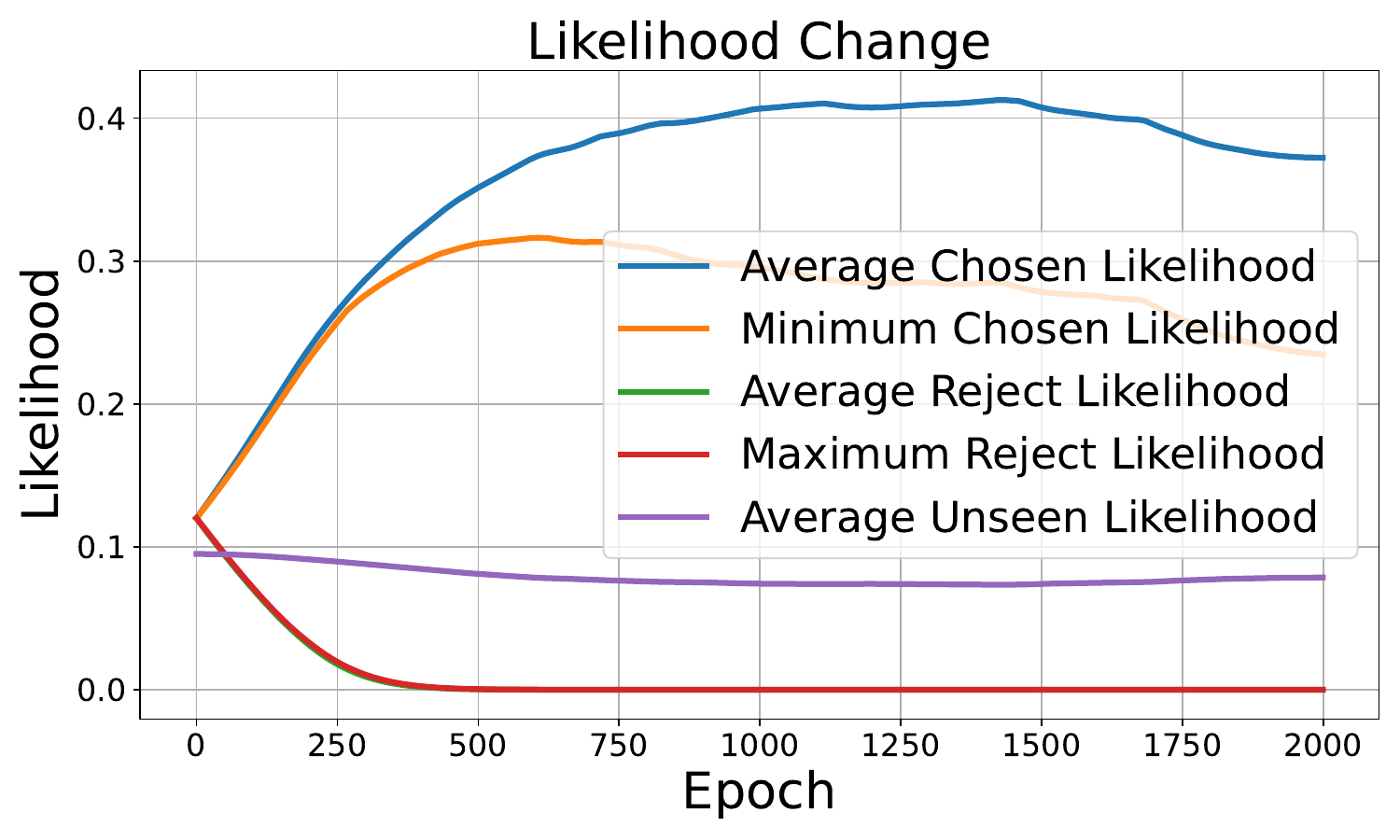}
  \end{subfigure}
  \hfill
  \begin{subfigure}[t]{0.32\textwidth}
    \centering
    \includegraphics[width=\linewidth]{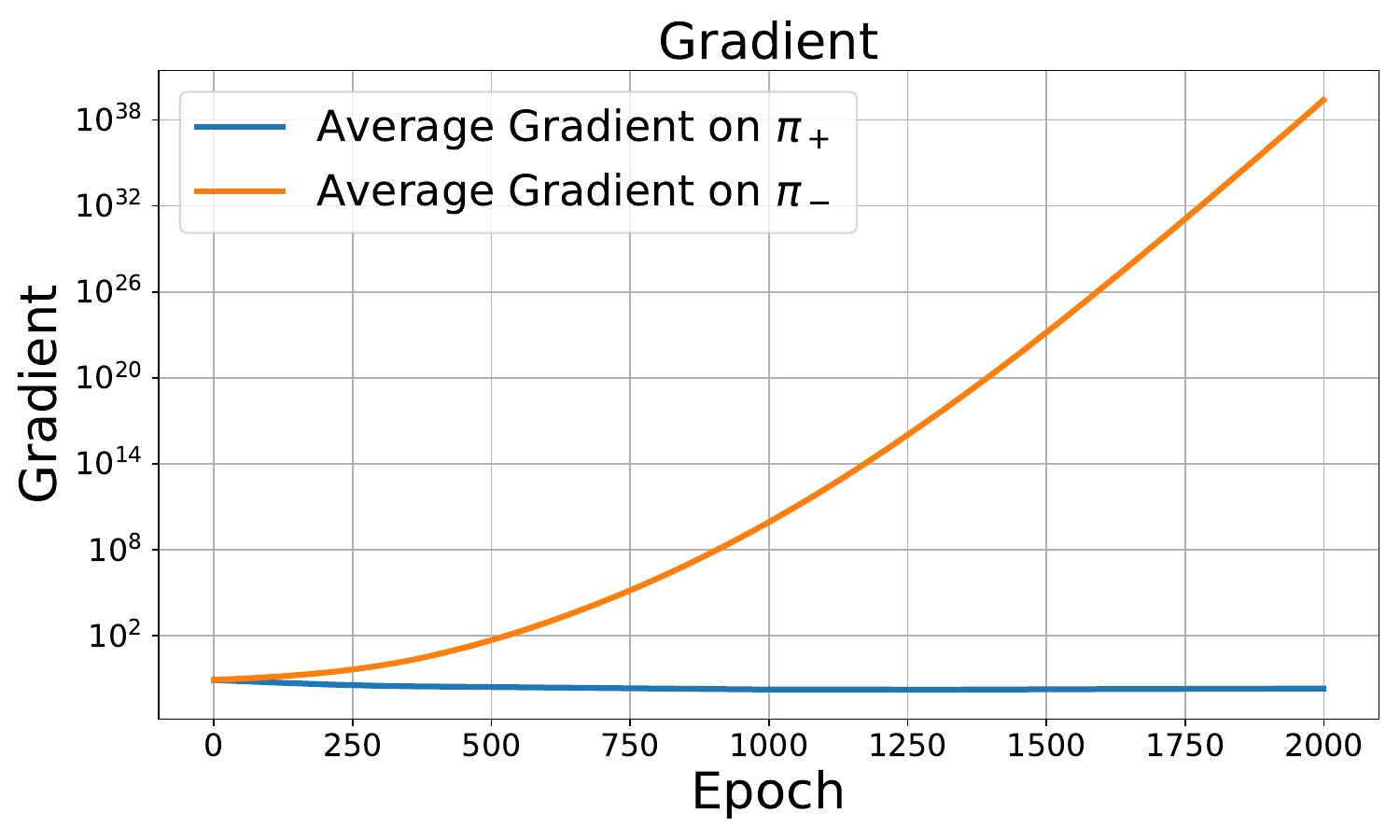}
  \end{subfigure}
  \hfill
  \begin{subfigure}[t]{0.32\textwidth}
    \centering
    \includegraphics[width=\linewidth]{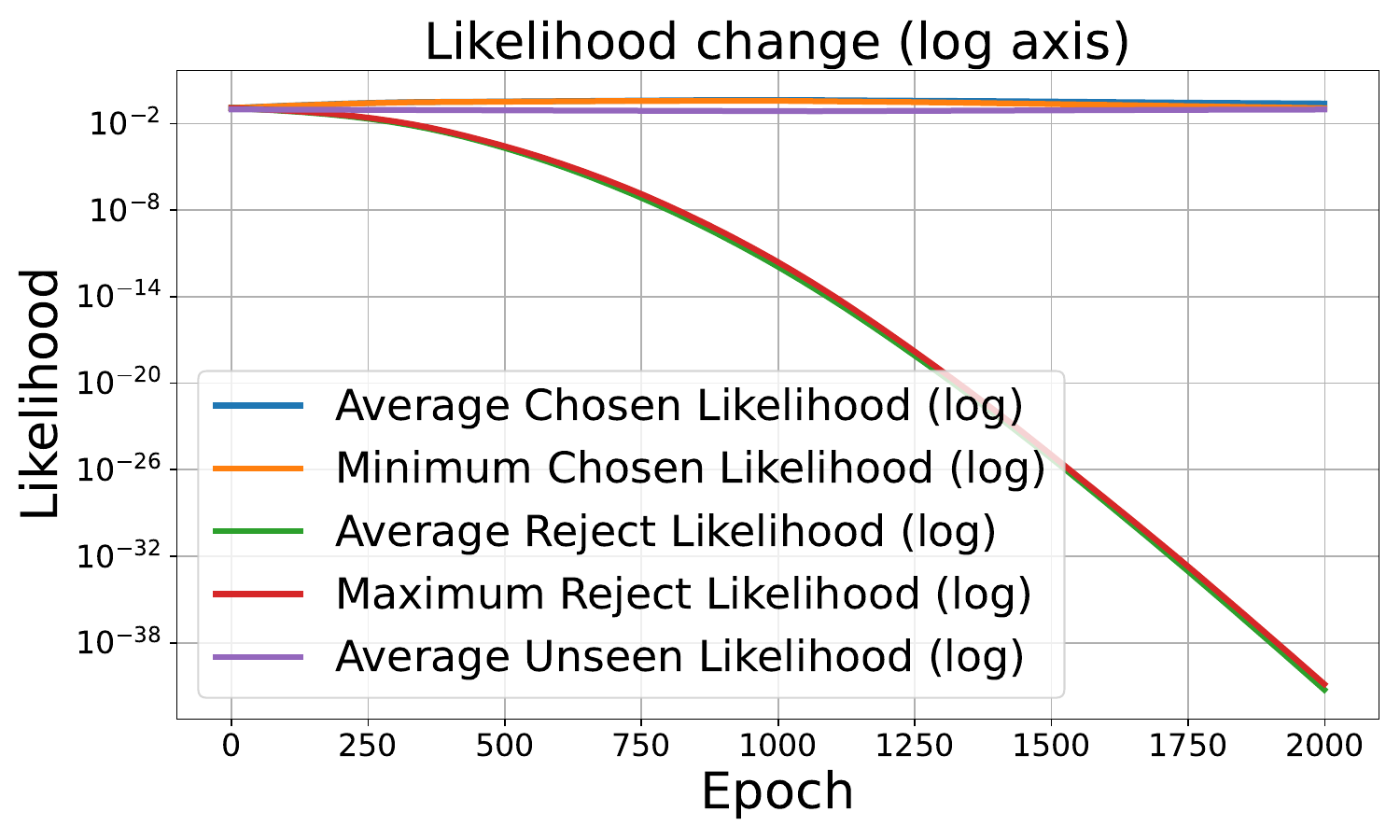}
  \end{subfigure}
  \caption{Dynamic optimization process with vanilla DPO using the toy model. Left: likelihood dynamics over training epochs. The blue curve represents the average likelihood of chosen responses, yellow shows the minimum for chosen responses, green represents the average for rejected responses, red shows the maximum for rejected responses, and purple represents the average for unseen responses. Middle: dynamics of averaged $\frac{\partial \ell^{DPO}}{\partial \pi^+}$ and $\frac{\partial \ell^{DPO}}{\partial \pi^-}$ over training epochs. Right: likelihood dynamics over training epochs on a log scale, highlighting the drastic drop in the likelihood of rejected responses.}
  \label{fig:dpo results on toy model}
\end{figure}

In this section, we introduce a simplified toy model specifically designed to facilitate synthetic experiments, thereby enhancing the persuasiveness of our arguments from Corollary~\ref{corollary:1} to \ref{corollary:3}. Then we conduct experiments on real LLMs.

\subsubsection{Toy Model Setup} 
\label{subsec:toy model setup}

The diagram for the toy model is illustrated in Figure~\ref{fig:toy model diagram}. We construct a discrete space consisting of 4 prompts and 10 responses. The policy $\pi_\theta$, which simulates a simplified version of an LLM, is implemented as a three-layer MLP that processes a one-hot vector and outputs a categorical distribution over the responses. The response space is organized such that the first 4 dimensions correspond to chosen responses, dimensions 5 through 8 represent rejected responses, and the final 2 dimensions correspond to unseen responses not present in the preference dataset. 

In this setup, each prompt has an optimal response (e.g., response 1 is optimal for prompt 1, as shown in the upper left figure). When constructing the preference dataset for DPO training, we adopt a mini-batch sampling strategy to mimic real-world annotation processes. Specifically, assuming an ideal annotator, each input prompt is perfectly matched with its optimal response—corresponding to the diagonal elements of the matrix, as illustrated in the upper right figure in Figure~\ref{fig:toy model diagram}. For each mini-batch, we then randomly select one other response within the batch to create preference data pairs. This approach ensures that gradient updates are computed from diverse mini-batch samples.

To simulate the Pretraining and SFT process, we manually assign output probabilities and use them as labels to train $\pi_\theta$. Initially, as shown in the lower left figure, we set the likelihood of both chosen and rejected responses at $0.12$, treating both as on-policy. The constructed preference dataset is then used for DPO training, with the output after 500 epochs shown in the lower right. The code is provided in the supplementary material. 

\subsubsection{Results} 
Figure~\ref{fig:dpo results on toy model} illustrates the dynamic optimization process during DPO training. In the first figure, the likelihood of chosen responses (blue and yellow curves) increases, while the likelihood of rejected responses (green and red curves) decreases in the early phases of training. However, as training progresses, the likelihood of chosen responses begins to decline in the longer run. During this degradation phase, as both chosen and rejected response likelihoods decrease, the probability is redistributed to unseen responses (purple curve).

The second and third figures reveal the underlying causes of this shift: as $\pi_\theta(a^-|x)$ approaches zero, the absolute value of $\partial \ell^{\text{DPO}} / \partial \pi^-$ increases sharply compared to $\partial \ell^{\text{DPO}} / \partial \pi^+$. The absolute value of $\partial \ell^{\text{DPO}} / \partial \pi^+$ becomes progressively smaller, weakening its influence on the optimization direction. These results align with the earlier theoretical analysis.

\begin{figure}[t]
  \centering
  \includegraphics[width=\columnwidth]{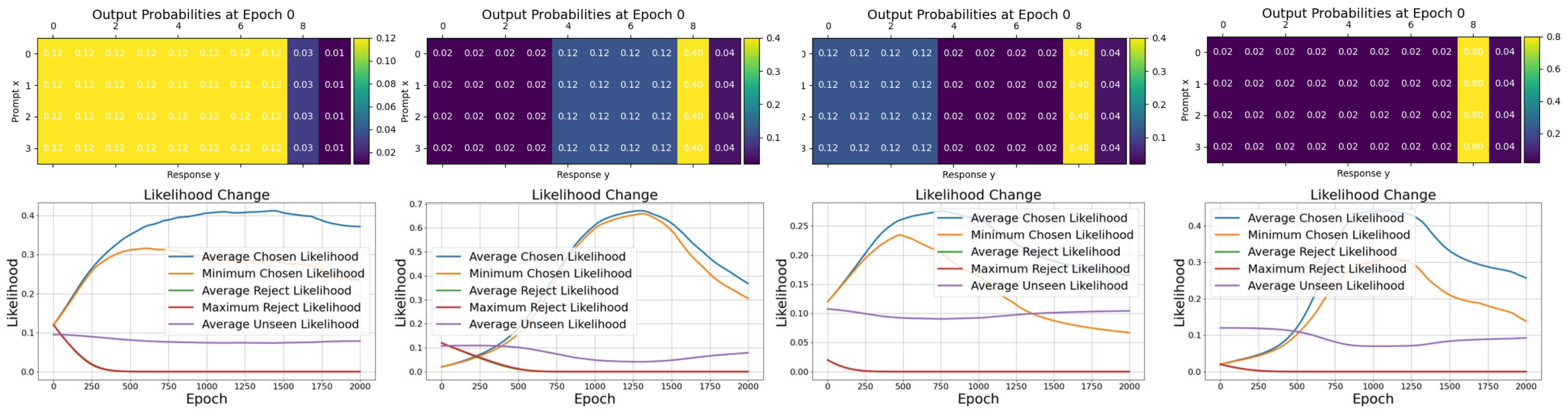}
  \caption{From left to right, the figures show the initial state and the likelihood dynamics for chosen/rejected/unseen responses in Scenarios 1 to 4, similar to the left diagram in \Cref{fig:dpo results on toy model}: (1) both chosen and rejected responses are on-policy, (2) chosen off-policy and rejected on-policy, (3) chosen on-policy and rejected off-policy, and (4) both off-policy.}
  \label{fig:toy model distribution}
\end{figure}

Moreover, this insight provides an explanation for the observed superiority of on-policy DPO (Observation~\ref{observation:onpolicy or offpolicy}). In contrast to off-policy DPO, on-policy DPO begins with a higher likelihood for rejected responses, thereby extending the duration before their likelihood significantly diminishes. To further validate the differing impacts of on-policy and off-policy DPO, we configure four scenarios by adjusting the initial distribution of outputs to simulate these conditions. A higher initialized likelihood (0.12) simulates responses sampled in an on-policy manner, while a lower one (0.02) simulates responses sampled off-policy. The initial state and the subsequent changes in the likelihood of each response are illustrated in Figure~\ref{fig:toy model distribution}. Notably, in Scenario 1, where both chosen and rejected responses are on-policy, the 3D-properties are relatively mild, as shown by the high peak probability of the optimal response (approximately $0.6$) and the minimal dispersion effect on unseen responses. Additional detailed results and analyses for all four scenarios can be found in Appendix~\ref{supple:sec:experiments details}.

\textbf{The intention of the toy model and its connection to real LLMs.} The toy model serves as an abstract simulation that amplifies the effect of 3D-properties, which are less pronounced and harder to visualize in real-world experiments. While the toy model differs from real LLM training in several ways—such as sampling frequency—its design offers useful insights. In real-world settings, DPO is typically trained over one epoch, with each data point used only a few times. In contrast, in the toy model, the same data points are sampled repeatedly. Conceptually, this is similar to treating each input/output as a token rather than a complete prompt/response, where each token may be sampled multiple times during real-world training. Since both the chosen and rejected responses are generated from the same prompt, they often share common tokens. As a result, the decrease in the likelihood of rejected responses can impact the likelihood of chosen responses, leading to a corresponding decline in their likelihood.

\subsection{Regularization Techniques}
\label{subsec:regularization techniques}

It becomes evident that the rate at which $\pi_\theta(a^-|x)$ declines is crucial in determining the severity of the 3D-properties' impact. This observation leads to the following proposition:
\begin{proposition}
\label{proposition: moderate decline rate}
To lessen the severity of the 3D-properties, it is advantageous to moderate the rate at which the likelihood of rejected responses declines.
\end{proposition}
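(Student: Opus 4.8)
The plan is to argue directly from the gradient expressions derived in Section~\ref{subsec:Theoretical Foundation}, treating the decline rate of $\pi^-$ as the control variable and tracing how each of the 3D-properties depends on it monotonically. First I would formalize "severity of the 3D-properties" in terms of quantities already made precise by Corollaries~\ref{corollary:1}--\ref{corollary:3}: for Property~\ref{property: drastic drop in rejected response likelihood}, the relevant measure is how small $\pi^-$ becomes relative to $\pi^+$, since Corollary~\ref{corollary:1} shows the gradient ratio is exactly $-\pi^+/\pi^-$; for Property~\ref{property: degradation into LLM unlearning}, it is the magnitude of $\partial \ell^{\text{DPO}}/\partial \pi^- = \frac{\alpha\beta}{1+\alpha z^\beta} z^{\beta-1} \frac{1}{\pi^+}$ versus the vanishing $\partial \ell^{\text{DPO}}/\partial \pi^+$, which by Corollary~\ref{corollary:2} blow up and collapse respectively precisely in the limit $\pi^- \to 0$; and for Property~\ref{property: dispersion effect on unseen responses}, it is the total mass $1 - \pi^+ - \pi^-$ displaced onto unseen responses, which by Corollary~\ref{corollary:3} grows once $\pi^-$ has collapsed and dragged $\pi^+$ down with it.

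The key steps, in order, would be: (i) observe that along the optimization trajectory $\pi^-$ is (to leading order) monotonically decreasing, so "moderating its decline rate" is equivalent, at any fixed epoch count, to keeping $\pi^-$ bounded away from $0$ for longer; (ii) show that each of the three pathological mechanisms is triggered by $\pi^-$ entering a neighborhood of $0$ — concretely, the gradient ratio $\pi^+/\pi^-$ stays $O(1)$ while $\pi^- = \Omega(\pi^+)$, the factor $z^{\beta-1}=(\pi^-/\pi^+)^{\beta-1}$ stays bounded while $\pi^-$ is bounded below, and the dispersion term cannot grow while $\pi^+$ is still being pushed up; (iii) conclude that a slower decline of $\pi^-$ delays entry into the pathological regime across all three properties simultaneously, hence lessens their severity over any finite training horizon. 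I would package this as: if $\pi^-(t) \geq \pi^-_{\text{slow}}(t)$ pointwise for two trajectories differing only in how fast the rejected likelihood is suppressed, then the gradient-ratio imbalance, the $\partial\ell/\partial\pi^-$ magnitude, and the displaced mass are all no larger for the slower trajectory.

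The main obstacle is that this is really a heuristic/qualitative claim rather than a theorem with a clean hypothesis--conclusion form: "severity" is not a single scalar, the three properties interact, and the monotonicity of $\pi^-$ along the actual MLP-parameterized trajectory is not literally guaranteed (the coupling through shared tokens, emphasized in the toy-model discussion, is exactly what breaks a naive per-coordinate argument). So the honest move is to state the proposition as a design principle \emph{motivated} by — not formally entailed by — Corollaries~\ref{corollary:1}--\ref{corollary:3}: each corollary isolates a term that is a monotone function of the smallness of $\pi^-$, and therefore controlling the decline rate of $\pi^-$ is the single most direct lever on all three. The remainder of the section would then be devoted not to a deductive proof but to exhibiting concrete regularizers (adaptive gradient reweighting on $\pi^+$ versus $\pi^-$, and an added SFT term) that realize this principle, with the toy-model and real-LLM experiments serving as the empirical validation that the principle does lessen the 3D-properties in practice.
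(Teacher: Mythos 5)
Your proposal matches the paper's own treatment: the paper likewise does not give a formal deductive proof, but presents Proposition~\ref{proposition: moderate decline rate} as a design principle read off from Corollaries~\ref{corollary:1}--\ref{corollary:3} (each pathology is triggered as $\pi^-\to 0$), and then substantiates it via the Flex-DPO/SFT-loss gradient analysis and the empirical observation that the scenario with the slowest decline in $\log\pi(a^-)$ performs best. Your honest framing of the claim as motivated rather than entailed, with regularizers and experiments as the validation, is exactly the paper's route.
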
 
Inspired by Proposition~\ref{proposition: moderate decline rate}, we introduce two straightforward regularization techniques. The first technique employs adaptive values of $\beta$ to control the rate at which the likelihood of rejected responses declines, referred as Flex-DPO. 
The second technique involves augmenting the DPO loss with an SFT loss, a strategy that has been shown to significantly enhance the stability of DPO in previous studies~\citep{hou2024chatglm,xu2024chatglm}. These regularization methods have shown promising results with our toy model and will be further validated in real LLMs in the following section. The theoretical analysis is similar to that of vanilla DPO thus deferred to Appendix~\ref{supple:subsec:Regularization Techniques}.

\subsection{Inherent Absence of 3D-properties in RM-based alignment}
\label{sec:Non-existence of 3D-properties in reward-based alignment}

In this section, we show that 3D-properties do not manifest in RM-based alignment methods, which may account for why DPO methods only achieve sub-optimal performance. Since DPO is closely related to RM training, and the Best-of-N performance of the RM can partially reflect the ultimate performance of the policy model~\citep{gui2024bonbon}, we focus on analyzing the RM's objective. For a given $(x, a^+, a^-)$, let $r^+ := r(a^+|x)$ and $r^- := r(a^-|x)$, the gradients with respect to $r^+$ and $r^-$ are:
$$\frac{\partial \ell^{\text{RM}}}{\partial r^+} = \frac{\partial \log(1+e^{(r^- - r^+)})}{\partial r^+} = -\frac{e^{(r^- - r^+)}}{1+e^{(r^- - r^+)}} = -\frac{1}{1+e^{(r^+ - r^-)}},$$
$$\frac{\partial \ell^{\text{RM}}}{\partial r^-} = \frac{\partial \log(1+e^{(r^- - r^+)})}{\partial r^-} = \frac{e^{(r^- - r^+)}}{1+e^{(r^- - r^+)}} = \frac{1}{1+e^{(r^+ - r^-)}}.$$
This indicates that the gradients for the chosen and rejected responses are balanced and do not exhibit 3D-properties. In Section~\ref{subsec:Validation on Reward-free algorithm is Sub-optimal to Reward-based algorithm}, we will further discuss the relationship between DPO and RM-based alignment in real LLMs.

\section{Experiments}
\label{sec: experiments}
In this section, we transition from theoretical analyses and toy model simulations to real-world experiments with LLMs to further validate our theoretical insights. We verify the existence of 3D-properties, the superiority of on-policy DPO over off-policy DPO, the superiority of RM over DPO, and the effectiveness of the proposed regularization technique.

\subsection{Experimental Setup}
\label{sec:setup}
\textbf{Datasets.} We chose mathematical reasoning and instruction following as our primary benchmarks because these tasks are easily quantifiable, providing clear metrics for evaluating model performance. For mathematical reasoning, we used MATH~\citep{hendrycks2021measuring} as the main dataset for both training and testing\footnote{Only the prompts from the dataset were used to generate the preference dataset; further details are provided in Section~\ref{sec:main_exp}.}. To assess the model's out-of-distribution (OOD) generalization capabilities, we selected SuperCLUE-Math~\citep{xu2020clue}, another dataset which was used exclusively for testing. Additionally, we included two in-house datasets focused on poem and slogan generation to evaluate the model's ability to handle creative tasks with strict structural and linguistic constraints. The poem dataset, for instance, which has rigid format and rhyme requirements, making it a good test for evaluating the model's ability to follow complex instructions. Further details and descriptions of the datasets used are provided in Appendix~\ref{sec:poen_metric}.

It is widely accepted in industry that preference datasets for model alignment should cover a broad range of domains. Following this consensus, we further utilized a general dataset consisting of approximately 400,000 preference samples across diverse domains. These prompts were sourced from HH-rlhf~\citep{bai2022training} and UltraFeedBack~\citep{cui2024ultrafeedback}. A detailed breakdown of the dataset sizes is provided in Table~\ref{tab:dataset statistic} in Appendix~\ref{sec:poen_metric}.

\textbf{The LLMs of concern.} 
We focus on Baichuan2-13B and Baichuan2-33B, an advanced bilingual (Chinese and English) LLM series. The 13B model is openly available~\citep{yang2023baichuan}, and the 33B model extends the 13B architecture with increased parameters.

\subsection{Effect of Training Data Distribution: on-policy vs. off-policy} 
\label{sec:main_exp}
Building on the theoretical insights in Section~\ref{sec:3d_property}, we hypothesize that the performance of vanilla DPO is significantly influenced by the distribution gap between the training dataset and outputs of the policy model, specifically whether the algorithm is on-policy or off-policy. Off-policy DPO uses an external preference dataset, while on-policy DPO samples preferences directly from the policy model. On-policy DPO enjoys a smaller distribution gap compared with off-poliy DPO. 

To conduct on-policy DPO, we used the policy model to produce 8 candidates for each prompt in the train set of MATH. The best and worst responses were selected by GPT-4~\citep{achiam2023gpt} to form a preference pair, with the standard solutions given as the reference context. After filtering out uniformly good or bad responses, we compiled the MATH$^{*}$ dataset, which contains 5,826 pairs $\{x, a^+, a^-\}$. We randomly selected 2,000 samples from the original test set to serve as the test set for MATH$^{*}$. For off-policy DPO, we used the original solutions from the dataset as the chosen responses, and generated the rejected responses using Qwen1.5-7B~\citep{bai2023qwen}, a relatively earlier model with limited capabilities.

To validate the hypothesis that the presence of off-policy data weakens performance, we implemented the four scenarios consistent with the toy model (Figure~\ref{fig:toy model distribution}). We evaluated the policy model using GPT-4, which assigned scores ranging from 1 to 5 based on the accuracy of both the final answer and the problem-solving process, with also the standard solutions given as the reference context. The scoring criteria are detailed in~\Cref{tab:score_criteria}. The average performance on the MATH$^{*}$ and SuperCLUE-Math datasets is reported in~\Cref{tab:DPO 4 scenario results mean}, with specific results in~\Cref{tab:bc_math_combined}.  Among the four scenarios, Scenario 1—where both chosen and rejected responses are on-policy—ensured a more stable DPO training process and delivered the best performance.

Additionally, we report the log probabilities before and after training in \Cref{tab:on_off_policy} in Appendix~\ref{appen:subsec:supplementary experimental results}. According to Proposition~\ref{proposition: moderate decline rate}, the key factor affecting the impact of 3D-properties is the decline rate of rejected responses' likelihood, $\log \pi(a^-)$. Scenario 1 shows the slowest decline in likelihood compared to the other scenarios, effectively mitigating the adverse effects of 3D-properties, which explains the superior performance of on-policy DPO in our tests. 

We also plot the gradients during the DPO training process for Scenario 1, as shown in \Cref{fig:true_gradient} in the Appendix~\ref{appen:subsec:supplementary experimental results}. This visualization supports the analysis in \Cref{sec:3d_property}, demonstrating that the gradients for rejected responses increase more rapidly during training. This excessive decline in the likelihood of generating rejected responses can ultimately lead to model degradation. 

In addition to pure on-policy and off-policy DPO, Scenario 2, where the chosen response is off-policy and the rejected response is on-policy, is also prevalent in industry. For instance, in math problems, researchers often treat the correct dataset solution as the chosen response and the LLM-generated incorrect answer as the rejected one, which we demonstrate to be detrimental. Scenario 3 is a mirror experiment for Scenario 2. These experiments confirm that incorporating off-policy data into the preference training set degrades DPO performance.

\begin{table}[t]
  \setlength{\tabcolsep}{14pt}
  \caption{Results tested on MATH$^{*}$ and SuperCLUE-Math. Scenario 1, with both chosen and rejected responses sampled on-policy, shows the best performance.}
  \label{tab:DPO 4 scenario results mean}
  \centering
  \begin{tabular}{lcccc}
    \toprule
    \multicolumn{1}{c}{} & \multicolumn{2}{c}{Baichuan2-13B} & \multicolumn{2}{c}{Baichuan2-33B} \\
    Setting    & 5 points   & 4\&5 points & 5 points   & 4\&5 points \\
    \midrule
    basemodel & 32.237\% & 42.539 \% & 44.485\% &  53.229\% \\
    \midrule
    DPO in Scenario 1  & \textbf{37.132\%}  &  \textbf{47.082\%}  & \textbf{47.465\%}  &   \textbf{54.759\%} \\
    DPO in Scenario 2  & 32.860\%  &  43.445\%  & 44.216\%  &  51.409\% \\
    DPO in Scenario 3  & 28.323\%  &  41.576\%  & 44.473\%  &  53.924\% \\
    DPO in Scenario 4  & 26.833\%  &  37.685\%  & 46.618\%  &  54.648\% \\
    \bottomrule
  \end{tabular}
\end{table}

\begin{table}[t]
    \small
    \setlength{\tabcolsep}{5pt}
    \caption{Test results on the self-built Poem and Slogan datasets. All metrics are evaluated such that higher values indicate better performance.}
    \label{tab:poem}
    \centering
    \begin{tabular}{cccccc|cc}
    \toprule
    \multicolumn{1}{c}{}& \multicolumn{5}{c}{Poem} & \multicolumn{2}{c}{Slogan} \\
     & Row Number & Words per Row & Rhythm & Tone Pattern & Title & Word Count & Content \\
    \midrule
    Base & 0.75 & 0.61 & 0.64 & 0.60 & 0.51 & 0.34 & 0.57\\
    \midrule 
    PPO & 0.91 & \textbf{0.79} & \textbf{0.87} & \textbf{0.82} & \textbf{1} & \textbf{0.47} & \textbf{0.78}\\    
    DPO & \textbf{0.93} & 0.75 & 0.83 & 0.75 & 0.78 & 0.45 & 0.70\\
    \bottomrule
    \end{tabular}
    \vspace{-8pt}
\end{table}

\subsection{Experimental Validation of Regularization Techniques}

\begin{wrapfigure}{r}{0.4\textwidth}
    \centering
    \includegraphics[width=0.4\textwidth]{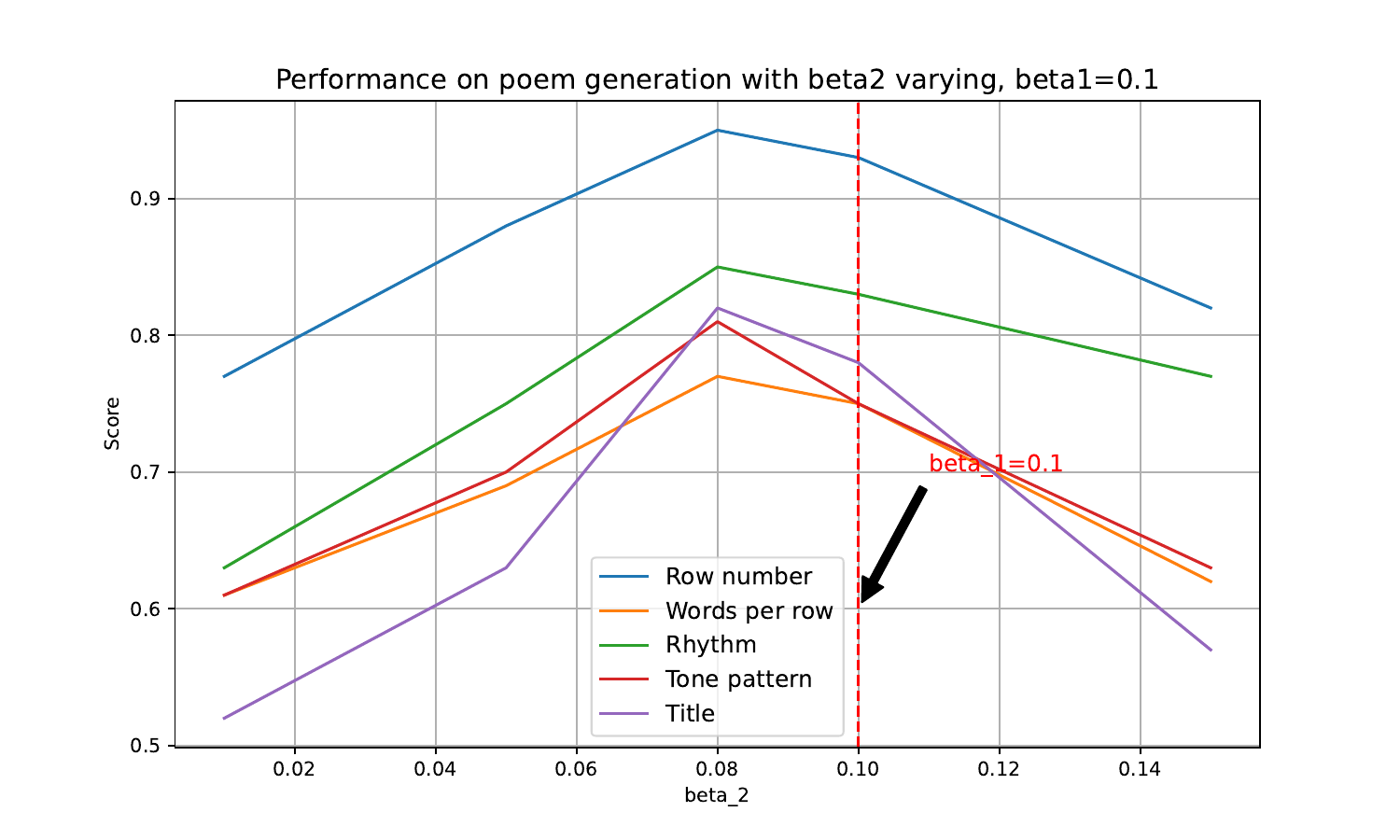} 
    \caption{Performance on poem generation, $\beta^-$ varying with $\beta^+=0.1$.}
    \label{fig:beta2_varying}
    \vspace{-12pt}
\end{wrapfigure}

Following Flex-DPO, the regularization methods outlined in Section~\ref{subsec:regularization techniques}, we fixed $\beta^+$ and systematically decreased $\beta^-$. As indicated by the gradient analysis in Appendix~\ref{appen:subsubsec:Flexible DPO}, the gradient of rejected responses with respect to $\beta^-$ follows a non-monotonic trajectory, initially increasing and then decreasing. Reducing $\beta^-$ on the left side of this extreme point can effectively reduce the gradient magnitude. However, indiscriminately minimizing the gradient is not always advantageous. As illustrated in \Cref{fig:beta2_varying}, model performance does not consistently improve with an excessively small $\beta^-$ (see the trend with $\beta^- < 0.08$). Over-reduction of $\beta^-$ risks causing the DPO algorithm to deviate from the preference learning paradigm and regress toward behavior akin to the SFT algorithm, ultimately compromising its generalization capabilities.  This finding warrants further investigation, and while preliminary insights are discussed in Appendix~\ref{supple:sec:experiments details}, deeper exploration is needed. This aspect will be addressed in future research.

Additionally, we tested other DPO variants, such as IPO and SLiC, on the MATH$^{*}$ and SuperCLUE-Math datasets, with results presented in Table~\ref{supple:tab:other methods on MATH}. Flex-DPO consistently outperforms vanilla DPO, IPO and SLiC, highlighting the effectiveness of the proposed regularization techniques.

\subsection{Relative Instability of DPO Training Compared to RM Training} 
\label{subsec:compare_to_rm}

\begin{wrapfigure}{r}{0.4\textwidth}
    \centering
    \includegraphics[width=0.4\textwidth]{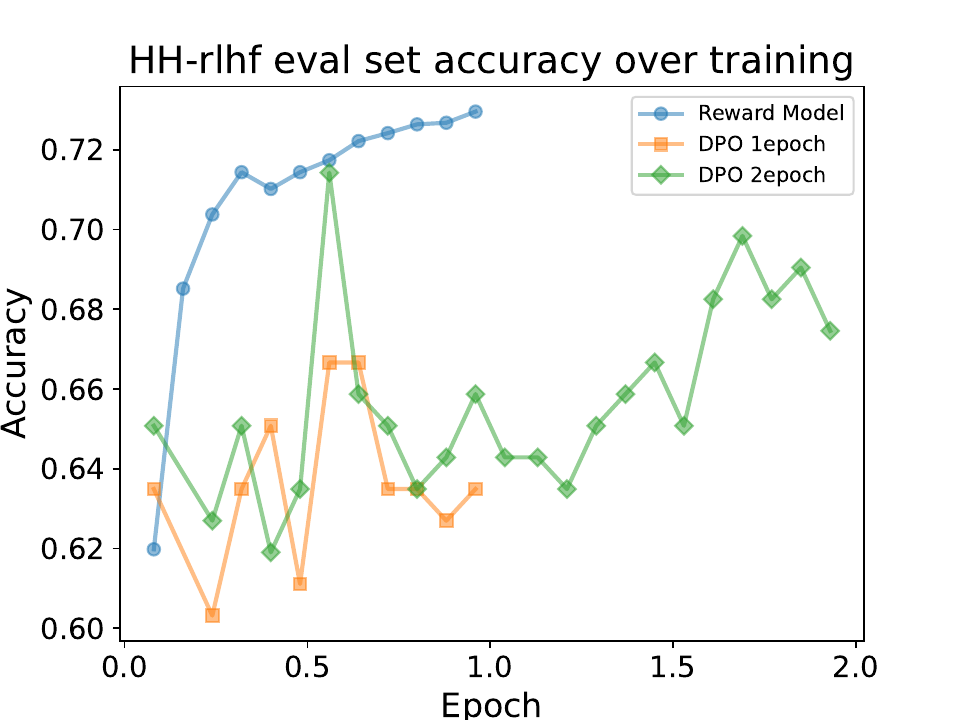} 
    \caption{Accuracy of RM and DPO on HH-rlhf eval set over the training process.}
    \label{fig:RM vs DPO stable}
\end{wrapfigure}

To assess the stability and performance gap between DPO and RM training, we conducted a parallel comparison using identical datasets. Both models were built on the Baichuan2-33B architecture and trained on the HH-rlhf and UltraFeedback datasets, with evaluations conducted on the HH-rlhf and MATH datasets. The primary evaluation metric was accuracy, defined as the proportion of instances where the model correctly identified the chosen response as superior to the rejected one.

As shown in \Cref{fig:RM vs DPO stable}, RM training proved to be significantly more stable, whereas DPO training exhibited notable fluctuations. These findings are consistent with the theoretical results in Section~\ref{sec:Non-existence of 3D-properties in reward-based alignment}, which indicate that 3D-properties are absent in RM-based alignment methods. Furthermore, as illustrated in \Cref{fig:math_va1_acc_loss} and \Cref{fig:hh_va1_acc_loss} in Appendix~\ref{appen:subsec:supplementary experimental results}, the DPO model demonstrated a higher tendency to overfit. Specifically, the sharp deceleration in accuracy improvement after the second epoch suggests that the model was overfitting the training data, highlighting the more aggressive optimization dynamics of DPO.

\subsection{Suboptimality of DPO Compared to RM-based Alignment}
\label{subsec:Validation on Reward-free algorithm is Sub-optimal to Reward-based algorithm}

To further compare the performance of DPO and the end-to-end RM-based alignment (PPO), we tested both approaches on two datasets for poem and slogan generation. These datasets serve as ideal benchmarks for evaluating instruction-following capabilities, given their explicit and structured scoring criteria. For poem creation, the model must generate responses in accordance with specific text and tone formats based on the prompt. Evaluation metrics include five key aspects: \emph{Row Number}, \emph{Words per Row}, \emph{Rhythm}, \emph{Tone Pattern}, and \emph{Title}. For slogan creation, evaluation is based on \emph{Word Count} and \emph{Content}. Using Baichuan2-33B for our experiments, the results, shown in \Cref{tab:poem}, demonstrate that DPO underperforms compared to RLHF-PPO on both datasets.

\section{Conclusions}

In this study, we conducted a comprehensive theoretical analysis to elucidate why DPO does not perform as well as RM-based alignment algorithm. The principal challenge identified in DPO is summarized as 3D-properties. We substantiated our theoretical framework through experimental results obtained from both a toy model and real LLMs in practical applications, including mathematical reasoning and instruction following. Additionally, we assessed the effectiveness of specific regularization techniques. Furthermore, by contrasting DPO training with RM training, we highlighted the inherent instability of DPO. We hope this work could offer research directions to narrow the gap between RM-free preference learning methods and RM-based ones. We leave the discussion of limiation to Appendix~\ref{appen:subsec:limitations}.

\bibliography{iclr2025_conference}

\begin{thebibliography}{41}
\providecommand{\natexlab}[1]{#1}
\providecommand{\url}[1]{\texttt{#1}}
\expandafter\ifx\csname urlstyle\endcsname\relax
  \providecommand{\doi}[1]{doi: #1}\else
  \providecommand{\doi}{doi: \begingroup \urlstyle{rm}\Url}\fi

\bibitem[Achiam et~al.(2023)Achiam, Adler, Agarwal, Ahmad, Akkaya, Aleman, Almeida, Altenschmidt, Altman, Anadkat, et~al.]{achiam2023gpt}
Josh Achiam, Steven Adler, Sandhini Agarwal, Lama Ahmad, Ilge Akkaya, Florencia~Leoni Aleman, Diogo Almeida, Janko Altenschmidt, Sam Altman, Shyamal Anadkat, et~al.
\newblock Gpt-4 technical report.
\newblock \emph{arXiv preprint arXiv:2303.08774}, 2023.

\bibitem[Amini et~al.(2024)Amini, Vieira, and Cotterell]{amini2024direct}
Afra Amini, Tim Vieira, and Ryan Cotterell.
\newblock Direct preference optimization with an offset.
\newblock \emph{arXiv preprint arXiv:2402.10571}, 2024.

\bibitem[Anthropic(2024)]{claude_2024}
Anthropic.
\newblock Introducing claude.
\newblock \url{https://www.anthropic.com/claude}, 2024.

\bibitem[Azar et~al.(2024)Azar, Guo, Piot, Munos, Rowland, Valko, and Calandriello]{azar2024general}
Mohammad~Gheshlaghi Azar, Zhaohan~Daniel Guo, Bilal Piot, Remi Munos, Mark Rowland, Michal Valko, and Daniele Calandriello.
\newblock A general theoretical paradigm to understand learning from human preferences.
\newblock In \emph{International Conference on Artificial Intelligence and Statistics}, pp.\  4447--4455. PMLR, 2024.

\bibitem[Bai et~al.(2023)Bai, Bai, Chu, Cui, Dang, Deng, Fan, Ge, Han, Huang, et~al.]{bai2023qwen}
Jinze Bai, Shuai Bai, Yunfei Chu, Zeyu Cui, Kai Dang, Xiaodong Deng, Yang Fan, Wenbin Ge, Yu~Han, Fei Huang, et~al.
\newblock Qwen technical report.
\newblock \emph{arXiv preprint arXiv:2309.16609}, 2023.

\bibitem[Bai et~al.(2022)Bai, Jones, Ndousse, Askell, Chen, DasSarma, Drain, Fort, Ganguli, Henighan, et~al.]{bai2022training}
Yuntao Bai, Andy Jones, Kamal Ndousse, Amanda Askell, Anna Chen, Nova DasSarma, Dawn Drain, Stanislav Fort, Deep Ganguli, Tom Henighan, et~al.
\newblock Training a helpful and harmless assistant with reinforcement learning from human feedback.
\newblock \emph{arXiv preprint arXiv:2204.05862}, 2022.

\bibitem[Bradley \& Terry(1952)Bradley and Terry]{Bradley_Terry}
Ralph~Allan Bradley and Milton~E. Terry.
\newblock Rank analysis of incomplete block designs: I. the method of paired comparisons.
\newblock \emph{Biometrika}, pp.\  324, 1952.
\newblock \doi{10.2307/2334029}.
\newblock URL \url{http://dx.doi.org/10.2307/2334029}.

\bibitem[Chen et~al.(2024)Chen, He, Su, and Zhu]{chen2024noise}
Huayu Chen, Guande He, Hang Su, and Jun Zhu.
\newblock Noise contrastive alignment of language models with explicit rewards.
\newblock \emph{arXiv preprint arXiv:2402.05369}, 2024.

\bibitem[Choshen et~al.(2019)Choshen, Fox, Aizenbud, and Abend]{choshen2019weaknesses}
Leshem Choshen, Lior Fox, Zohar Aizenbud, and Omri Abend.
\newblock On the weaknesses of reinforcement learning for neural machine translation.
\newblock \emph{arXiv preprint arXiv:1907.01752}, 2019.

\bibitem[Chowdhery et~al.(2023)Chowdhery, Narang, Devlin, Bosma, Mishra, Roberts, Barham, Chung, Sutton, Gehrmann, et~al.]{chowdhery2023palm}
Aakanksha Chowdhery, Sharan Narang, Jacob Devlin, Maarten Bosma, Gaurav Mishra, Adam Roberts, Paul Barham, Hyung~Won Chung, Charles Sutton, Sebastian Gehrmann, et~al.
\newblock Palm: Scaling language modeling with pathways.
\newblock \emph{Journal of Machine Learning Research}, 24\penalty0 (240):\penalty0 1--113, 2023.

\bibitem[Cui et~al.(2024)Cui, Yuan, Ding, Yao, He, Zhu, Ni, Xie, Xie, Lin, et~al.]{cui2024ultrafeedback}
Ganqu Cui, Lifan Yuan, Ning Ding, Guanming Yao, Bingxiang He, Wei Zhu, Yuan Ni, Guotong Xie, Ruobing Xie, Yankai Lin, et~al.
\newblock Ultrafeedback: Boosting language models with scaled ai feedback.
\newblock In \emph{Forty-first International Conference on Machine Learning}, 2024.

\bibitem[Ethayarajh et~al.(2024)Ethayarajh, Xu, Muennighoff, Jurafsky, and Kiela]{ethayarajh2024kto}
Kawin Ethayarajh, Winnie Xu, Niklas Muennighoff, Dan Jurafsky, and Douwe Kiela.
\newblock Kto: Model alignment as prospect theoretic optimization.
\newblock \emph{arXiv preprint arXiv:2402.01306}, 2024.

\bibitem[Feng et~al.(2024)Feng, Qin, Huang, Zhang, and Lei]{feng2024towards}
Duanyu Feng, Bowen Qin, Chen Huang, Zheng Zhang, and Wenqiang Lei.
\newblock Towards analyzing and understanding the limitations of dpo: A theoretical perspective.
\newblock \emph{arXiv preprint arXiv:2404.04626}, 2024.

\bibitem[Gui et~al.(2024)Gui, G{\^a}rbacea, and Veitch]{gui2024bonbon}
Lin Gui, Cristina G{\^a}rbacea, and Victor Veitch.
\newblock Bonbon alignment for large language models and the sweetness of best-of-n sampling.
\newblock \emph{arXiv preprint arXiv:2406.00832}, 2024.

\bibitem[Guo et~al.(2024)Guo, Zhang, Liu, Liu, Khalman, Llinares, Rame, Mesnard, Zhao, Piot, et~al.]{guo2024direct}
Shangmin Guo, Biao Zhang, Tianlin Liu, Tianqi Liu, Misha Khalman, Felipe Llinares, Alexandre Rame, Thomas Mesnard, Yao Zhao, Bilal Piot, et~al.
\newblock Direct language model alignment from online ai feedback.
\newblock \emph{arXiv preprint arXiv:2402.04792}, 2024.

\bibitem[Hendrycks et~al.(2021)Hendrycks, Burns, Kadavath, Arora, Basart, Tang, Song, and Steinhardt]{hendrycks2021measuring}
Dan Hendrycks, Collin Burns, Saurav Kadavath, Akul Arora, Steven Basart, Eric Tang, Dawn Song, and Jacob Steinhardt.
\newblock Measuring mathematical problem solving with the math dataset.
\newblock \emph{arXiv preprint arXiv:2103.03874}, 2021.

\bibitem[Hou et~al.(2024)Hou, Niu, Du, Zhang, Liu, Zeng, Zheng, Huang, Wang, Tang, et~al.]{hou2024chatglm}
Zhenyu Hou, Yiin Niu, Zhengxiao Du, Xiaohan Zhang, Xiao Liu, Aohan Zeng, Qinkai Zheng, Minlie Huang, Hongning Wang, Jie Tang, et~al.
\newblock Chatglm-rlhf: Practices of aligning large language models with human feedback.
\newblock \emph{arXiv preprint arXiv:2404.00934}, 2024.

\bibitem[Jiang et~al.(2023)Jiang, Sablayrolles, Mensch, Bamford, Chaplot, Casas, Bressand, Lengyel, Lample, Saulnier, et~al.]{jiang2023mistral}
Albert~Q Jiang, Alexandre Sablayrolles, Arthur Mensch, Chris Bamford, Devendra~Singh Chaplot, Diego de~las Casas, Florian Bressand, Gianna Lengyel, Guillaume Lample, Lucile Saulnier, et~al.
\newblock Mistral 7b.
\newblock \emph{arXiv preprint arXiv:2310.06825}, 2023.

\bibitem[Li et~al.(2023)Li, Xu, Zhang, Yu, Sun, and Luo]{li2023remax}
Ziniu Li, Tian Xu, Yushun Zhang, Yang Yu, Ruoyu Sun, and Zhi-Quan Luo.
\newblock Remax: A simple, effective, and efficient method for aligning large language models.
\newblock \emph{arXiv preprint arXiv:2310.10505}, 2023.

\bibitem[Liu et~al.(2023)Liu, Zhao, Joshi, Khalman, Saleh, Liu, and Liu]{liu2023statistical}
Tianqi Liu, Yao Zhao, Rishabh Joshi, Misha Khalman, Mohammad Saleh, Peter~J Liu, and Jialu Liu.
\newblock Statistical rejection sampling improves preference optimization.
\newblock \emph{arXiv preprint arXiv:2309.06657}, 2023.

\bibitem[Meng et~al.(2024)Meng, Xia, and Chen]{meng2024simpo}
Yu~Meng, Mengzhou Xia, and Danqi Chen.
\newblock Simpo: Simple preference optimization with a reference-free reward.
\newblock \emph{arXiv preprint arXiv:2405.14734}, 2024.

\bibitem[Mitchell(2023)]{pytheia_MM}
Eric Mitchell.
\newblock Online experimental results on dpo, 2023.
\newblock \url{https://wandb.ai/eric_anthony_mitchell/dpo-demos/runs/og8q3euz?nw=nwusereric_anthony_mitchell}.

\bibitem[OpenAI(2022)]{OMS}
OpenAI.
\newblock Introducing chatgpt, 2022.
\newblock \url{https://openai.com/blog/chatgpt}, Last accessed on 2023-05-09.

\bibitem[Ouyang et~al.(2022)Ouyang, Wu, Jiang, Almeida, Wainwright, Mishkin, Zhang, Agarwal, Slama, Ray, et~al.]{ouyang2022training}
Long Ouyang, Jeffrey Wu, Xu~Jiang, Diogo Almeida, Carroll Wainwright, Pamela Mishkin, Chong Zhang, Sandhini Agarwal, Katarina Slama, Alex Ray, et~al.
\newblock Training language models to follow instructions with human feedback.
\newblock \emph{Advances in neural information processing systems}, 35:\penalty0 27730--27744, 2022.

\bibitem[Pal et~al.(2024)Pal, Karkhanis, Dooley, Roberts, Naidu, and White]{pal2024smaug}
Arka Pal, Deep Karkhanis, Samuel Dooley, Manley Roberts, Siddartha Naidu, and Colin White.
\newblock Smaug: Fixing failure modes of preference optimisation with dpo-positive.
\newblock \emph{arXiv preprint arXiv:2402.13228}, 2024.

\bibitem[Rafailov et~al.(2024)Rafailov, Sharma, Mitchell, Manning, Ermon, and Finn]{rafailov2024direct}
Rafael Rafailov, Archit Sharma, Eric Mitchell, Christopher~D Manning, Stefano Ermon, and Chelsea Finn.
\newblock Direct preference optimization: Your language model is secretly a reward model.
\newblock \emph{Advances in Neural Information Processing Systems}, 36, 2024.

\bibitem[Schulman et~al.(2017)Schulman, Wolski, Dhariwal, Radford, and Klimov]{schulman2017proximal}
John Schulman, Filip Wolski, Prafulla Dhariwal, Alec Radford, and Oleg Klimov.
\newblock Proximal policy optimization algorithms.
\newblock \emph{arXiv preprint arXiv:1707.06347}, 2017.

\bibitem[Stiennon et~al.(2020)Stiennon, Ouyang, Wu, Ziegler, Lowe, Voss, Radford, Amodei, and Christiano]{stiennon2020learning}
Nisan Stiennon, Long Ouyang, Jeffrey Wu, Daniel Ziegler, Ryan Lowe, Chelsea Voss, Alec Radford, Dario Amodei, and Paul~F Christiano.
\newblock Learning to summarize with human feedback.
\newblock \emph{Advances in Neural Information Processing Systems}, 33:\penalty0 3008--3021, 2020.

\bibitem[Tang et~al.(2024)Tang, Guo, Zheng, Calandriello, Cao, Tarassov, Munos, Pires, Valko, Cheng, et~al.]{tang2024understanding}
Yunhao Tang, Daniel~Zhaohan Guo, Zeyu Zheng, Daniele Calandriello, Yuan Cao, Eugene Tarassov, R{\'e}mi Munos, Bernardo~{\'A}vila Pires, Michal Valko, Yong Cheng, et~al.
\newblock Understanding the performance gap between online and offline alignment algorithms.
\newblock \emph{arXiv preprint arXiv:2405.08448}, 2024.

\bibitem[Team et~al.(2023)Team, Anil, Borgeaud, Wu, Alayrac, Yu, Soricut, Schalkwyk, Dai, Hauth, et~al.]{team2023gemini}
Gemini Team, Rohan Anil, Sebastian Borgeaud, Yonghui Wu, Jean-Baptiste Alayrac, Jiahui Yu, Radu Soricut, Johan Schalkwyk, Andrew~M Dai, Anja Hauth, et~al.
\newblock Gemini: a family of highly capable multimodal models.
\newblock \emph{arXiv preprint arXiv:2312.11805}, 2023.

\bibitem[Touvron et~al.(2023)Touvron, Martin, Stone, Albert, Almahairi, Babaei, Bashlykov, Batra, Bhargava, Bhosale, et~al.]{touvron2023llama}
Hugo Touvron, Louis Martin, Kevin Stone, Peter Albert, Amjad Almahairi, Yasmine Babaei, Nikolay Bashlykov, Soumya Batra, Prajjwal Bhargava, Shruti Bhosale, et~al.
\newblock Llama 2: Open foundation and fine-tuned chat models.
\newblock \emph{arXiv preprint arXiv:2307.09288}, 2023.

\bibitem[Wang et~al.(2024)Wang, Bi, Pentyala, Ramnath, Chaudhuri, Mehrotra, Mao, Asur, et~al.]{wang2024comprehensive}
Zhichao Wang, Bin Bi, Shiva~Kumar Pentyala, Kiran Ramnath, Sougata Chaudhuri, Shubham Mehrotra, Xiang-Bo Mao, Sitaram Asur, et~al.
\newblock A comprehensive survey of llm alignment techniques: Rlhf, rlaif, ppo, dpo and more.
\newblock \emph{arXiv preprint arXiv:2407.16216}, 2024.

\bibitem[Xiong et~al.(2023)Xiong, Dong, Ye, Wang, Zhong, Ji, Jiang, and Zhang]{xiong2023iterative}
Wei Xiong, Hanze Dong, Chenlu Ye, Ziqi Wang, Han Zhong, Heng Ji, Nan Jiang, and Tong Zhang.
\newblock Iterative preference learning from human feedback: Bridging theory and practice for rlhf under kl-constraint.
\newblock In \emph{ICLR 2024 Workshop on Mathematical and Empirical Understanding of Foundation Models}, 2023.

\bibitem[Xu et~al.(2020)Xu, Hu, Zhang, Li, Cao, Li, Xu, Sun, Yu, Yu, et~al.]{xu2020clue}
Liang Xu, Hai Hu, Xuanwei Zhang, Lu~Li, Chenjie Cao, Yudong Li, Yechen Xu, Kai Sun, Dian Yu, Cong Yu, et~al.
\newblock Clue: A chinese language understanding evaluation benchmark.
\newblock \emph{arXiv preprint arXiv:2004.05986}, 2020.

\bibitem[Xu et~al.(2024{\natexlab{a}})Xu, Fu, Gao, Ye, Liu, Mei, Wang, Yu, and Wu]{xu2024dpo}
Shusheng Xu, Wei Fu, Jiaxuan Gao, Wenjie Ye, Weilin Liu, Zhiyu Mei, Guangju Wang, Chao Yu, and Yi~Wu.
\newblock Is dpo superior to ppo for llm alignment? a comprehensive study.
\newblock \emph{arXiv preprint arXiv:2404.10719}, 2024{\natexlab{a}}.

\bibitem[Xu et~al.(2024{\natexlab{b}})Xu, Liu, Liu, Hou, Li, Zhang, Wang, Zeng, Du, Zhao, et~al.]{xu2024chatglm}
Yifan Xu, Xiao Liu, Xinghan Liu, Zhenyu Hou, Yueyan Li, Xiaohan Zhang, Zihan Wang, Aohan Zeng, Zhengxiao Du, Wenyi Zhao, et~al.
\newblock Chatglm-math: Improving math problem-solving in large language models with a self-critique pipeline.
\newblock \emph{arXiv preprint arXiv:2404.02893}, 2024{\natexlab{b}}.

\bibitem[Yang et~al.(2023)Yang, Xiao, Wang, Zhang, Bian, Yin, Lv, Pan, Wang, Yan, et~al.]{yang2023baichuan}
Aiyuan Yang, Bin Xiao, Bingning Wang, Borong Zhang, Ce~Bian, Chao Yin, Chenxu Lv, Da~Pan, Dian Wang, Dong Yan, et~al.
\newblock Baichuan 2: Open large-scale language models.
\newblock \emph{arXiv preprint arXiv:2309.10305}, 2023.

\bibitem[Yuan et~al.(2024)Yuan, Cui, Wang, Ding, Wang, Deng, Shan, Chen, Xie, Lin, et~al.]{yuan2024advancing}
Lifan Yuan, Ganqu Cui, Hanbin Wang, Ning Ding, Xingyao Wang, Jia Deng, Boji Shan, Huimin Chen, Ruobing Xie, Yankai Lin, et~al.
\newblock Advancing llm reasoning generalists with preference trees.
\newblock \emph{arXiv preprint arXiv:2404.02078}, 2024.

\bibitem[Zhang et~al.(2022)Zhang, Roller, Goyal, Artetxe, Chen, Chen, Dewan, Diab, Li, Lin, et~al.]{zhang2022opt}
Susan Zhang, Stephen Roller, Naman Goyal, Mikel Artetxe, Moya Chen, Shuohui Chen, Christopher Dewan, Mona Diab, Xian Li, Xi~Victoria Lin, et~al.
\newblock Opt: Open pre-trained transformer language models.
\newblock \emph{arXiv preprint arXiv:2205.01068}, 2022.

\bibitem[Zhao et~al.(2023)Zhao, Joshi, Liu, Khalman, Saleh, and Liu]{zhao2023slic}
Yao Zhao, Rishabh Joshi, Tianqi Liu, Misha Khalman, Mohammad Saleh, and Peter~J Liu.
\newblock Slic-hf: Sequence likelihood calibration with human feedback.
\newblock \emph{arXiv preprint arXiv:2305.10425}, 2023.

\bibitem[Ziegler et~al.(2019)Ziegler, Stiennon, Wu, Brown, Radford, Amodei, Christiano, and Irving]{ziegler2019fine}
Daniel~M Ziegler, Nisan Stiennon, Jeffrey Wu, Tom~B Brown, Alec Radford, Dario Amodei, Paul Christiano, and Geoffrey Irving.
\newblock Fine-tuning language models from human preferences.
\newblock \emph{arXiv preprint arXiv:1909.08593}, 2019.

\end{thebibliography}
\bibliographystyle{iclr2025_conference}

\appendix
\section{Detailed Background and Related Works}
\label{supple:sec:related work}

Large language models (LLMs) are profoundly transforming the way we work and live. Performing a three-stage process is the default practice for training LLMs: Pretraining, Supervised Fine-Tuning (SFT), and Reinforcement Learning from Human Feedback (RLHF). The roles of Pretraining and SFT are broadly understood: Pretraining encodes knowledge and SFT aligns question-answer formats. Relatively speaking, the understanding of RLHF is relatively insufficient. Specifically, Direct Preference Optimization (DPO) and its variants, as reward-model-free algorithms, have garnered significant attention due to their elegant mathematical form and relatively low resource requirements~\citep{rafailov2024direct, pal2024smaug,guo2024direct,xiong2023iterative}. However, it has also sparked considerable debate because of its unstable performance in practical applications~\citep{li2023remax, xu2024dpo}.

\subsection{On-policy alignment vs. Off-policy alignment}
\label{sec:on_off_policy}
The key inspiration for the DPO algorithm~\citep{rafailov2024direct} is a closed-form solution to the RL step in RLHF, and thus an equivalent solution to the optimal policy for RLHF objective. The original DPO work is an off-policy learning algorithm for it relies on an extra preference dataset (Helpful-and-Harmless~\citep{bai2022training}), where the preference pairs are not generated by the policy LLM itself. On the other hand, there are a bunch of on-policy learning algorithms developed, where the preference responses are sampled from the policy model. \citet{guo2024direct} proposed the on-policy version of DPO. In on-policy DPO, all responses are sampled in a batch-wise way. A natural trade-off between them is the iterative DPO introduced by~\citet{xiong2023iterative,xu2024dpo}. The algorithm begins by initializing with an additional preference dataset, then iteratively trains a policy using DPO, collects response pairs through exploration policies, obtains preference signals from human or AI labelers, and updates the dataset with the newly labeled data.

\subsection{Insights into DPO}

Though the RM-free algorithms are favored due to their lower computational overhead, if they can achieve on-performance with state-of-art RM-based methods such as RLHF-PPO sparked a lot of discussions. \citet{liu2023statistical} proves that the absence of RM in DPO constrains its ability to sample preference pairs from the optimal policy. \citet{xu2024dpo} show that DPO may have fundamental limitations that its optimal solution is a superset of the optimal solution of the PPO algorithm. This work also reports the empirical results that the performance of DPO is affected by the distribution shift between the model outputs and the preference dataset. \citet{feng2024towards} discusses the limitations of DPO from the perspective of gradient numerical stability, and conducted experiments to preliminarily verify it. However, they did not conduct experiments on real LLM and illustrate the correlation.

\subsection{Other RM-free alignment algorithms}
A major limitation of the DPO objective is its reliance on the Bradley-Terry model to convert pairwise preferences into point-wise rewards. To overcome this, \citet{azar2024general} introduced $\Phi$-preference optimization ($\Phi\text{PO}$), where DPO is a special case of it that $\Phi(P)=\log \frac{P}{1-P}$. Identity-preference optimization (IPO) is a variant that replaces the $\Phi$-function by an identity mapping function $\Phi(P)=P$.

Different from the DPO or IPO, the core idea of Sequence Likelihood Calibration (SLiC)~\citep{zhao2023slic} is to calibrate the likelihood of ranked sequences sampled from the policy being trained. The SLiC loss function can be decomposed into two parts: the rank function to guarantee that the difference between $\log\pi_\theta(a^+|x)$ and $\log\pi_\theta(a^-|x)$ is greater than $\delta$ under the current policy $\pi_\theta$, and the cross-entropy regularizer that to encourage the model to stay close to the SFT model. 

There are some other variants that tries to improve DPO, such as KTO~\citep{ethayarajh2024kto}, NCA~\citep{chen2024noise}, ODPO (DPO with an offset)~\citep{amini2024direct}. KTO uses a Kahneman-Tversky model of human utility and proposes a method that directly maximizes the utility of generations instead of maximizing the log-likelihood of preferences. NCA leverages Noise Contrastive Estimation (NCE) to bridge the gap in handling reward datasets explicitly annotated with scalar evaluations. ODPO does not treat every preference pair equally during fine-tuning and requires the difference between the likelihood of the preferred and dispreferred response to be greater than an offset value.

\section{Theoretical Foundations}
\subsection{Fundamental Limitation in Vanilla DPO}
Here we revisit the theoretical findings in Section~\ref{subsec:Theoretical Foundation} in detail. The loss function for vanilla DPO is given by
$$\ell^{\text{DPO}}(\theta) = \sum_{(x, a^+, a^-)} \log \left(1+ \left( \frac{\pi_0(a^+|x)}{\pi_0(a^-|x)} \frac{\pi_\theta(a^-|x)}{\pi_\theta(a^+|x)} \right)^{\beta} \right).$$
For a given $(x, a^+, a^-)$, let
$$\alpha := \left(\frac{\pi_0(a^+|x)}{\pi_0(a^-|x)}\right)^\beta, \quad \pi^+ := \pi_\theta(a^+|x), \quad \pi^- := \pi_\theta(a^-|x),\quad z := \frac{\pi_\theta(a^-|x)}{\pi_\theta(a^+|x)}.$$
Then we have
$$\frac{\partial \ell^{\text{DPO}}}{\partial \pi^+} = \frac{\partial \log(1+\alpha z^\beta)}{\partial z} \frac{\partial z}{\partial \pi^+} = \frac{\alpha \beta}{1+\alpha z^\beta} z^{\beta-1} \frac{\partial z}{\partial \pi^+},$$
$$\frac{\partial \ell^{\text{DPO}}}{\partial \pi^-} = \frac{\partial \log(1+\alpha z^\beta)}{\partial z} \frac{\partial z}{\partial \pi^-} = \frac{\alpha \beta}{1+\alpha z^\beta} z^{\beta-1} \frac{\partial z}{\partial \pi^-}.$$
Considering the case when $\pi^- \rightarrow 0$, we get $(\alpha \beta)/(1+\alpha z^\beta) \to \alpha \beta$, thus,
$$\frac{\partial \ell^{\text{DPO}}}{\partial \pi^+} \to -\alpha \beta (\pi^+)^{-\beta-1} (\pi^-)^\beta, \quad \frac{\partial \ell^{\text{DPO}}}{\partial \pi^-} \to \alpha \beta (\pi^+)^{-\beta} (\pi^-)^{\beta-1}.$$
As $\pi^- \rightarrow 0$, since $\beta < 1$, $\frac{\partial \ell^{\text{DPO}}}{\partial \pi^+}$ is proportional to $(\pi^-)^\beta$ and tends to 0, while $\frac{\partial \ell^{\text{DPO}}}{\partial \pi^-}$ is proportional to $(\pi^-)^{\beta-1}$ and tends to infinity. Therefore, in this case, the gradient for the rejected action becomes extremely large, while the gradient for the chosen action becomes very small.

Then we want to further explore the token-level gradient. Here, $\pi^+$ and $\pi^-$ are the likelihood of the sequences. Let $\pi^+_i$ be the current selection probability of the $i$-th token for the chosen sequence, and let $\pi^-_i$ be the current selection probability of the $i$-th token for the rejected sequence:
$$\pi^+ = \prod_i \pi^+_i = \pi_{-i}^+ \cdot \pi_i^+,\quad \pi^- = \prod_i \pi^-_i = \pi_{-i}^- \cdot \pi_i^-,$$
where we have
$\frac{\partial \pi}{\partial \pi_i} = \pi_{-i}$. 
Consider a softmax function,  $s_i = \frac{e^{z_i}}{\sum_j e^{z_j}}$, the corresponding gradients are
$$\frac{\partial s_i}{\partial z_i} = s_i(1-s_i), \quad \frac{\partial s_j}{\partial z_i} = -s_i s_j, \quad i \neq j.$$
Let
$$C(\pi^+, \pi^-) := \alpha \beta^+ (\pi^+)^{-\beta^+} (\pi^-)^{\beta^-},$$
then, consider the current selection probability $\pi^+_i$ of the $i$-th token for the chosen sequence, let the sampled token's index be $c$. The logit corresponding to this token $c$ is denoted as $x^+_{i,c}$, then we have
$$\frac{\partial \ell^{\text{DPO'}}}{\partial x^+_{i,c}} = \frac{\partial \ell^{\text{DPO'}}}{\partial \pi^+} \frac{\partial \pi^+}{\partial \pi^+_i} \frac{\partial \pi^+_i}{\partial x_{i,c}^+} \to -C(\pi^+, \pi^-)(1-x^+_{i,c}),$$
if $c' \neq c$, we have
$$\frac{\partial \ell^{\text{DPO'}}}{\partial x^+_{i,c'}} = \frac{\partial \ell^{\text{DPO'}}}{\partial \pi^+} \frac{\partial \pi^+}{\partial \pi^+_i} \frac{\partial \pi^+_i}{\partial x_{i,c'}^+} \to C(\pi^+, \pi^-) x^+_{i,c'}.$$
Similarly, consider the current selection probability $\pi^-_i$ of the $i$-th token for the rejected sequence, let the sampled token's index be $c$. The logit corresponding to this token $c$ is denoted as $x^-_{i,c}$, then we have
$$\frac{\partial \ell^{\text{DPO'}}}{\partial x^-_{i,c}} = \frac{\partial \ell^{\text{DPO'}}}{\partial \pi^-} \frac{\partial \pi^-}{\partial \pi^-_i} \frac{\partial \pi^-_i}{\partial x_{i,c}^-} \to C(\pi^+, \pi^-)(1-x^-_{i,c}),$$
if $c' \neq c$, we have
$$\frac{\partial \ell^{\text{DPO'}}}{\partial x^-_{i,c'}} = \frac{\partial \ell^{\text{DPO'}}}{\partial \pi^-} \frac{\partial \pi^-}{\partial \pi^-_i} \frac{\partial \pi^-_i}{\partial x_{i,c'}^-} \to -C(\pi^+, \pi^-) x^-_{i,c'}.$$
We can see that the token-level gradients from the chosen response and the rejected response are at the same scale level. This reflects that DPO may not cause gradient numerical instability in the generation of a single token. However, if the impact of the algorithm on the state transition probability generated by autoregression is comprehensively considered, 3D-properties will still affect the performance of the algorithm.

\subsection{Analysis on Regularization Techniques}
\label{supple:subsec:Regularization Techniques}

In Section~\ref{subsec:regularization techniques}, we propose two straightforward regularization techniques. Here we provide theoretical analysis to see why they can mitigate 3D-properties.

\subsubsection{Flexible \texorpdfstring{$\beta$}{beta}-DPO}
\label{appen:subsubsec:Flexible DPO}
The first technique employs variable values of $\beta$ to control the rate at which the likelihood of rejected responses declines. Consider using different $\beta^+$ and $\beta^-$ for the chosen and rejected responses:
$$\ell^{\text{flex-DPO}}(\theta) = - \sum_{(x, a^+, a^-)} \log \sigma\left[\beta^+ \log \frac{\pi_\theta(a^+|x)}{\pi_0(a^+|x)} - \beta^- \log \frac{\pi_\theta(a^-|x)}{\pi_0(a^-|x)}\right].$$
The loss function can be re-written by:
$$\ell^{\text{flex-DPO}}(\theta) = \sum_{(x, a^+, a^-)} \log \left(1 + \left( \frac{\pi_0(a^+|x)}{\pi_\theta(a^+|x)} \right)^{\beta^+} \left( \frac{\pi_\theta(a^-|x)}{\pi_0(a^-|x)} \right)^{\beta^-} \right).$$
For a given $(x, a^+, a^-)$, let
$$\alpha := \frac{\pi_0(a^+|x)^{\beta^+}}{\pi_0(a^-|x)^{\beta^-}}, \quad \pi^+ := \pi(a^+|x), \quad \pi^- := \pi(a^-|x),\quad z := \frac{\pi_\theta(a^-|x)^{\beta^-}}{\pi_\theta(a^+|x)^{\beta^+}}.$$
Then we have
$$\frac{\partial \ell^{\text{DPO'}}}{\partial \pi^+} = \frac{\partial \log(1+\alpha z)}{\partial z} \frac{\partial z}{\partial \pi^+} = \frac{\alpha}{1+\alpha z} \frac{\partial z}{\partial \pi^+},$$
$$\frac{\partial \ell^{\text{DPO'}}}{\partial \pi^-} = \frac{\partial \log(1+\alpha z)}{\partial z} \frac{\partial z}{\partial \pi^-} = \frac{\alpha}{1+\alpha z} \frac{\partial z}{\partial \pi^-}.$$
Considering the case when $\pi^- \rightarrow 0$, we get $\alpha/(1+\alpha z) \to \alpha$, thus
$$\frac{\partial \ell^{\text{DPO'}}}{\partial \pi^+} \to -\alpha \beta^+ (\pi^+)^{-\beta^+ - 1} (\pi^-)^{\beta^-},\quad \frac{\partial \ell^{\text{DPO'}}}{\partial \pi^-} \to \alpha \beta^- (\pi^+)^{-\beta^+} (\pi^-)^{\beta^- - 1}.$$
As shown by the expressions for the gradients of the DPO' objective with respect to the chosen and rejected response likelihoods, the magnitude of the gradients is controlled by the parameters $\beta^+$ and $\beta^-$. In particular, increasing $\beta^+$ strengthens the gradient for the chosen response, $\pi^+$, while reducing $\beta^-$ dampens the gradient for the rejected response, $\pi^-$, as it approaches zero. This gradient behavior suggests that adjusting these parameters can effectively mitigate the 3D-properties discussed in Section~\ref{sec:3d_property}. Specifically, a large $\beta^+$ ensures that the likelihood of the chosen responses remains sufficiently reinforced, while a small $\beta^-$ prevents the likelihood of rejected responses from decreasing too rapidly, which would otherwise lead to the instability and degradation described earlier.

By fine-tuning $\beta^+$ and $\beta^-$, it becomes possible to control the interaction between the gradients of the chosen and rejected responses, reducing the drastic drop in rejected response likelihood, the degradation into response suppression, and the dispersion effect on unseen responses. This strategy thus offers a potential solution for improving the stability and performance of DPO by reducing the severity of the 3D-properties.

\subsubsection{SFT Loss Regularization}

The second technique involves augmenting the DPO loss with an SFT loss, a strategy that has been shown to significantly enhance the stability of DPO in previous studies. We can rewrite the loss function: 
\#
\ell^{\text{SFT-DPO}}(\theta) = - \sum_{(x, a^+, a^-)} \left \{ \log \sigma\left[\beta \log \frac{\pi_\theta({a}^+|x)}{\pi_0({a}^+|x)} - \beta \log \frac{\pi_\theta({a}^-|x)}{\pi_0({a}^-|x)}\right] - \gamma \log \pi_\theta(a^+|x) \right \}
\#
Similarly, we have,
$$\frac{\partial \ell^{\text{SFT-DPO}}}{\partial \pi^+} = \frac{\partial \log(1+\alpha z^\beta)}{\partial z} \frac{\partial z}{\partial \pi^+} = \frac{\alpha \beta}{1+\alpha z^\beta} z^{\beta-1} \frac{\partial z}{\partial \pi^+}-\gamma \frac{1}{\pi^+},$$
$$\frac{\partial \ell^{\text{SFT-DPO}}}{\partial \pi^-} = \frac{\partial \log(1+\alpha z^\beta)}{\partial z} \frac{\partial z}{\partial \pi^-} = \frac{\alpha \beta}{1+\alpha z^\beta} z^{\beta-1} \frac{\partial z}{\partial \pi^-}.$$
As $\pi^- \to 0$, the gradient for the chosen action $\frac{\partial \ell^{\text{SFT-DPO}}}{\partial \pi^+} \to -\gamma\frac{1}{\pi^+} \neq 0$, meaning that the likelihood of the chosen responses can continue to be optimized. This behavior is significant, as it indicates that even as the likelihood of rejected responses $\pi^-$ approaches zero, the chosen response $\pi^+$ can still be improved. This is a key advantage of SFT-DPO over the vanilla DPO, which suffers from gradient vanishing for $\pi^+$ when $\pi^- \to 0$. The negative impact of 3D-properties is thus reduced, allowing for more stable and effective optimization in the long run.

\subsection{Other invariants of DPO}
\subsubsection{Identity-preference Optimization (IPO)}


In IPO, the loss function can be written by,
\#
\ell^{\text{IPO}}(\theta)=\sum_{(x,a^+,a^-)}\left[\log\left[\frac{\pi_{\theta}(a^+|x)\pi_{0}(a^-|x)}{\pi_{\theta}(a^-|x)\pi_{0}(a^+|x)}\right] - \frac{1}{2\eta}\right]^2
\#
We directly give the gradients:
\#
\frac{\partial \ell^{\text{IPO}}}{\partial \pi_\theta({a}^-|x)} = -2 \left[\log\left[\frac{\pi_{\theta}(a^+|x)\pi_{0}(a^-|x)}{\pi_{\theta}(a^-|x)\pi_{0}(a^+|x)}\right]- \frac{1}{2\eta}\right] \cdot \frac{1}{\pi_{\theta}(a^-|x)}
\#
\#
\frac{\partial \ell^{\text{IPO}}}{\partial \pi_\theta({a}^+|x)} = 2 \left[\log\left[\frac{\pi_{\theta}(a^+|x)\pi_{0}(a^-|x)}{\pi_{\theta}(a^-|x)\pi_{0}(a^+|x)}\right]- \frac{1}{2\eta}\right] \cdot \frac{1}{\pi_{\theta}(a^+|x)}
\#
\subsubsection{Sequence Likelihood Calibration (SLiC)}

In SLiC, the loss function can be written by,
\#
\ell^{\text{SLiC}}(\theta)=\sum_{x,a^+,a^-}\max\left[0, \delta-\log\pi_\theta(a^+|x)+\log\pi_\theta(a^-|x)\right] - \eta\cdot\log\pi_\theta(a^+|x)
\#
if $\delta > \log \frac{\pi_{\theta}(a^+|x)}{\pi_{\theta}(a^-|x)}$:
\#
\frac{\partial \ell^{\text{SLiC}}}{\partial \pi_\theta({a}^+|x)} = -\frac{1+\eta}{\pi_\theta({a}^+|x)}, \quad \frac{\partial \ell^{\text{SLiC}}}{\partial \pi_\theta({a}^-|x)} = \frac{1}{\pi_\theta({a}^-|x)}
\#
else:
\#
\frac{\partial \ell^{\text{SLiC}}}{\partial \pi_\theta({a}^+|x)} = -\frac{\eta}{\pi_\theta({a}^+|x)}, \quad \frac{\partial \ell^{\text{SLiC}}}{\partial \pi_\theta({a}^-|x)} = 0
\#

\subsubsection{Simple Preference Optimization (SimPO)}

In SimPO, the loss function is written by,
\#
\ell^{\text{SimPO}}(\theta)=-\sum_{x,a^+,a^-}\left[ \log \sigma\left(\frac{\beta}{|a^+|}\log \pi_{\theta}(a^+|x)-\frac{\beta}{|a^-|}\log \pi_{\theta}(a^-|x)-\gamma\right)\right],
\#
where $|\cdot|$ represents the length of the generated response and $\gamma$ is a target reward margin term. Similarly, we directly give the gradients:
\#
\frac{\partial \ell^{\text{SimPO}}}{\partial \pi_\theta(a^+|x)} = -\frac{\beta}{|a^+| \cdot \pi_\theta(a^+|x)} \sigma\left( -\left( \frac{\beta}{|a^+|} \log \pi_\theta(a^+|x) - \frac{\beta}{|a^-|} \log \pi_\theta(a^-|x) - \gamma \right) \right),
\#
\#
\frac{\partial \ell^{\text{SimPO}}}{\partial \pi_\theta(a^-|x)} = \frac{\beta}{|a^-| \cdot \pi_\theta(a^-|x)} \sigma\left( -\left( \frac{\beta}{|a^+|} \log \pi_\theta(a^+|x) - \frac{\beta}{|a^-|} \log \pi_\theta(a^-|x) - \gamma \right) \right).
\#
The ratio of the gradient is 
\#
\frac{\partial \ell^{\text{SimPO}}}{\partial \pi^-} /  \frac{\partial \ell^{\text{SimPO}}}{\partial \pi^+} = - \frac{|a^+| \cdot \pi_\theta(a^+|x)}{|a^-| \cdot \pi_\theta(a^-|x)},
\#
which indicates that as $\pi^+$ increases and $\pi^-$ decreases, the gradient with respect to $\pi^-$ grows faster and leads to a rapid drop in the likelihood of the rejected response. However, different from DPO, as $\pi^- \rightarrow 0$, we have $\partial \ell^{\text{SimPO}}/\partial \pi^+ \rightarrow 0$. As for $\partial \ell^{\text{SimPO}}/\partial \pi^-$, the analysis is non-trivial. We conclude it as a lemma.
\begin{lemma}
As $ \pi_\theta(a^-|x) \to 0 $, the limit of the partial derivative regarding $ \pi_\theta(a^-|x) $ in SimPO is:
\#
\lim_{\pi_\theta(a^-|x) \to 0} \frac{\partial \ell^{\text{SimPO}}}{\partial \pi_\theta(a^-|x)} =
\begin{cases}
0, & \text{if } \beta > |a^-|, \\
+\infty, & \text{if } \beta < |a^-|, \\
\frac{\beta C}{|a^-|}, & \text{if } \beta = |a^-|,
\end{cases}
\#
where \( C = e^{ -\frac{\beta}{|a^+|} \log \pi_\theta(a^+|x) + \gamma } $ is a constant independent of \( \pi_\theta(a^-|x) $.
\end{lemma}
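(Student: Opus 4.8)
The plan is to collapse the expression for $\partial \ell^{\text{SimPO}}/\partial \pi_\theta(a^-|x)$ into a single power of $\pi_\theta(a^-|x)$ multiplied by a factor with an obvious limit, and then read off the trichotomy from the sign of that power. Write $\pi^- := \pi_\theta(a^-|x)$ and absorb every term in the sigmoid's argument that does not involve $\pi^-$ into $\log C$, where $C = e^{-\frac{\beta}{|a^+|}\log\pi_\theta(a^+|x)+\gamma}>0$; then the argument of $\sigma$ in the stated gradient formula is exactly $\log C + \frac{\beta}{|a^-|}\log\pi^-$. Substituting $\sigma(u)=1/(1+e^{-u})$ together with $e^{-\frac{\beta}{|a^-|}\log\pi^-}=(\pi^-)^{-\beta/|a^-|}$ gives
\[
\frac{\partial \ell^{\text{SimPO}}}{\partial \pi^-}
= \frac{\beta}{|a^-|\,\pi^-}\cdot\frac{1}{\,1 + C^{-1}(\pi^-)^{-\beta/|a^-|}\,}.
\]
The one nontrivial move is to multiply numerator and denominator of the second factor by $(\pi^-)^{\beta/|a^-|}$, which yields
\[
\frac{\partial \ell^{\text{SimPO}}}{\partial \pi^-}
= \frac{\beta}{|a^-|}\cdot\frac{(\pi^-)^{\frac{\beta}{|a^-|}-1}}{\,(\pi^-)^{\frac{\beta}{|a^-|}} + C^{-1}\,}.
\]

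Given this form, the limit $\pi^-\to 0^+$ is immediate. Since $\beta>0$ and $|a^-|>0$, the term $(\pi^-)^{\beta/|a^-|}$ in the denominator vanishes, so the second factor equals $(\pi^-)^{\beta/|a^-|-1}$ times a quantity converging to $C$, and the limiting behaviour is governed entirely by the exponent $\frac{\beta}{|a^-|}-1$. It is positive precisely when $\beta>|a^-|$, giving limit $0$; negative precisely when $\beta<|a^-|$, giving limit $+\infty$; and zero precisely when $\beta=|a^-|$, in which case $(\pi^-)^{0}=1$ and the limit is $\frac{\beta C}{|a^-|}$. This is the asserted trichotomy, and it also exhibits $C$ as the $\pi^-$-independent constant named in the statement.

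Since the computation is short, the genuine obstacle here is minor and really a bookkeeping point: one must not evaluate the limit by naively splitting the product $\frac{\beta}{|a^-|\pi^-}\cdot\sigma(\cdots)$, in which one factor diverges and the other vanishes — the rewrite above is exactly what removes this $0\cdot\infty$ indeterminacy and converts the problem into a clean power law. For completeness I would also check that $C$ depends only on $\pi_\theta(a^+|x)$, $\beta$, $\gamma$ and $|a^+|$ (hence is constant in $\pi^-$), and that $|a^-|$ is a fixed positive integer so that $(\pi^-)^{\beta/|a^-|}$ is well defined and continuous on $(0,1)$. Finally, I would remark that this parallels Corollary~\ref{corollary:2} for vanilla DPO, the threshold separating the vanishing and exploding regimes merely shifting from $\beta<1$ to $\beta<|a^-|$ on account of the $1/|a^-|$ length normalization in the SimPO objective.
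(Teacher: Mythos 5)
Your proof is correct and follows essentially the same route as the paper: both reduce the gradient to a factor $(\pi^-)^{\beta/|a^-|-1}$ multiplied by a term converging to $\tfrac{\beta C}{|a^-|}$, and read the trichotomy off the sign of the exponent. The only difference is cosmetic—you perform an exact algebraic rewrite of $\sigma$, whereas the paper invokes the asymptotic $\sigma(-z)\sim e^{-z}$ as $z\to+\infty$—which if anything makes your version marginally tidier without changing the argument.
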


\begin{proof}
Let $\pi^- = \pi_\theta(a^-|x) $ and $ \pi^+ = \pi_\theta(a^+|x) $. The partial derivative becomes:
\mequa
f(\pi^-) = \frac{\beta}{|a^-| \cdot \pi^-} \cdot \sigma(-z),
\mequa
where $z = \frac{\beta}{|a^+|} \log \pi^+ - \frac{\beta}{|a^-|} \log \pi^- - \gamma$.

As $ \pi^- \to 0 $, $ \log \pi^- \to -\infty $, thus $ z \to +\infty $. The sigmoid function approximates to:
\mequa
\sigma(-z) \approx e^{-z} = C \cdot (\pi^-)^{\frac{\beta}{|a^-|}},
\mequa
where $ C = e^{ -\frac{\beta}{|a^+|} \log \pi^+ + \gamma } $. By substituting back, we have:
\mequa
f(\pi^-) \approx \frac{\beta C}{|a^-|} \cdot (\pi^-)^{\frac{\beta}{|a^-|} - 1}.
\mequa
Taking the limit as \( \pi^- \to 0 $:
\mequa
\lim_{\pi^- \to 0} f(\pi^-) =
\begin{cases}
0, & \text{if } \frac{\beta}{|a^-|} - 1 > 0 \ (\beta > |a^-|), \\
+\infty, & \text{if } \frac{\beta}{|a^-|} - 1 < 0 \ (\beta < |a^-|), \\
\frac{\beta C}{|a^-|}, & \text{if } \frac{\beta}{|a^-|} - 1 = 0 \ (\beta = |a^-|).
\end{cases}
\mequa
\end{proof}
Note that in practice, $\beta$ is normally chosen less than $|a^-|$, which means the drastic drop in rejected response will happen and 3D-properties still occur. 

\begin{remark}
These variants all alleviate the 3D-properties problem of DPO, so the results on mathematical reasoning are partly improved (see \Cref{supple:tab:other methods on MATH}). The invariants we tested do not perform well uniformly on all the tasks. For example, on instruction following tasks like poem generation, vanilla DPO outperforms SLiC and IPO. One hypothesis is their solution forms diverge significantly from the Bradley-Terry model, leading to a loss of generalization in preference learning. 
\end{remark}

\section{Dataset Description}
\label{sec:poen_metric}

\begin{table}[t]
  \setlength{\tabcolsep}{12pt}
  \caption{Data source of the responses in 4 scenarios.}
  \label{supple:tab:onpolicy offpolicy setting}
  \centering
  \begin{tabular}{lll}
    \toprule
        &  chosen responses source   & rejected responses source  \\
    \midrule
    Scenario 1 & Baichuan2-33B &  Baichuan2-33B   \\
    Scenario 2 & Solutions from dataset  &  Baichuan2-33B  \\
    Scenario 3 & Baichuan2-33B &  Qwen-7B   \\
    Scenario 4 & Solutions from dataset  &  Qwen-7B  \\
    \bottomrule
  \end{tabular}
\end{table}

\begin{table}[t]
\caption{Evaluation criteria of the responses to the math questions.}
\centering
\begin{tabular}{lp{10cm}}
\toprule
5 points &  Full-score answer, requiring a correct response with the correct process, considering all possibilities, and being comprehensive.\\ 
\midrule
4 points & For complex questions, the answer is correct but lacks the process; for simple questions, the answer is correct but accompanied by a very redundant and verbose reasoning process. \\
\midrule
3 points & The answer is incorrect, but most of the process is correct, or the answer is correct, but there are obvious errors in the process.  \\
\midrule
2 points & The answer is incorrect, and most of the process is incorrect. \\
\midrule
1 point & The answer and the entire process and thought process are incorrect, or the answer doesn't process to final result. \\
\bottomrule
\label{tab:score_criteria}
\end{tabular}
\end{table}

\begin{table}[t]
  \setlength{\tabcolsep}{6pt}
  \caption{Poem dataset test set.}
  \label{supple:tab:poem dataset testset}
  \centering
  \begin{tabular}{lccccc}
    \toprule
    Poem type & Quatrain &  Song Ci & Ancient Poetry & Metrical poetry & Modern poetry\\
    \midrule
    Test set Count & 138 & 518 & 93 & 173 & 85\\
    \bottomrule
  \end{tabular}
\end{table}

\begin{table}[t]
  \setlength{\tabcolsep}{8pt}
  \caption{The statistic of used datasets.}
  \label{tab:dataset statistic}
  \centering
  \begin{tabular}{ccccccc}
    \toprule
        & MATH$^*$   & SuperCLUE  & Poem & Slogan & UltraFeedBack &  HH-RLHF \\
    \midrule
    train set & 5,826 & -  &   93,269 & 13,592 & 170,000 &  336,820\\
    test set  & 2,000 &  1,072  &  1,000 & 1,000  & - & -\\
    \bottomrule
  \end{tabular}
\end{table}

\Cref{supple:tab:onpolicy offpolicy setting} shows the data source for the LLM experiments in Section~\ref{sec: experiments} in 4 scenarios. For example, in Scenario 1, the chosen and the rejected responses are both sampled from Baichuan2-33B and can be regarded as on-policy learning. In Scenario 4, the chosen responses are exactly the solutions given in the datasets while the rejected responses are sampled from another different LLM: Qwen-7B. In the experiments regarding Baichuan2-13B, we use the same data rather than re-sample the on-policy chosen responses. There are two reasons: 1. The 13B model is not as strong as the 33B model, therefore we can not sample enough high-quality responses as the chosen ones. 2. Models in the Baichuan2-series are all using the same dataset in Pretraining and SFT, therefore we can approximately think that their outputs are identically distributed. The log probability for the base model in \Cref{tab:on_off_policy} confirms this fact.

\Cref{tab:score_criteria} describes the evaluation criteria of the responses to the math questions. A score of 5 means both the process and the result are correct, and a score of 4 or 5 means the answer is correct. We use these two indicators to evaluate the mathematical reasoning ability of the model. GPT-4 is used as the AI evaluator. We provide the evaluation prompt in our code in the supplementary material.

\Cref{supple:tab:poem dataset testset} shows the types of different poems in the poem dataset we used and their corresponding numbers in the test set. The language is all Chinese. Chinese poetry has strict format and rhyme requirements depending on the type. For example, for the quatrains, the row number must be 4, the number of words per row must be 5 or 7.
The second and fourth sentences in the quatrain are required to rhyme, that is, the words at the end of the second and fourth sentences need to follow the prescribed tone. In this manner, we design a rule-based evaluation system to score each dimension of the generated answers. We selected the following characteristics as the basis for our evaluation:
\begin{itemize}
    \item \emph{Row Number}: For quatrain, the row number must be 4. For metrical poetry, the row number must be 8. 
    \item \emph{Words per Row}: For quatrain and metrical poetry, the number of words per row must be 8.
    \item \emph{Rhythm}: Every type of poetry has a certain rhyme pattern requirement. Since it is a bit complicated to describe case by case, we put the requirements in the form of code in the supplementary material.
    \item \emph{Tone Pattern}: For Song Ci, the tone pattern depends on the brand name.
    \item \emph{Title}: Determined by the requirement in the prompt.
\end{itemize}
For the Slogan dataset, we evaluate the model's performance based on whether it meets the word count requirements (\emph{Word Count}) and the quality of the content (\emph{Content}). We also provide the scoring and evaluation rule-based criteria in our code in the supplementary material.

\Cref{tab:dataset statistic} shows the amount of data in each dataset. MATH, SuperCLUE, UltraFeedBack and HH-rlfh are open-source datasets, while Poem, Slogan are in-house self-built datasets. We provide part of the in-house datasets in the supplementary material to clarify the format and the content. SuperCLUE is only for cross-dataset testing,  and COMMON is only used for the comparison between the RM training and DPO training in Section~\ref{subsec:compare_to_rm}.

\section{Experiments Details}
\label{supple:sec:experiments details}

\subsection{Training Setting}
In this section, we provide a detailed overview of our training settings. Following the implementation of~\citet{rafailov2024direct}, we use the Adam optimizer with the learning rate set to 5e-7. The most sensitive parameter in the DPO algorithm is $\beta$ (and learning rate but less significant). Here we use the default setting aligned with the original DPO paper, The $\beta$ here is set to be 0.1 and the learning rate is set to be $5\times10^{-6}$, which is the best hyperparameter set as far as we explored.  We set the batch size to 80 and the number of gradient accumulation steps to 2. The training epoch was set to 1. In IPO training, we set $\eta$ to be 0.1. In SLiC training, we set $\delta=5$, $\eta=0.1$. All experiments were conducted on a cluster consisting of 40 A100 GPUs.

\subsection{Supplementary Experimental Results}
\label{appen:subsec:supplementary experimental results}

\Cref{tab:on_off_policy} reports the log probabilities before and after training. Scenario 1 exhibits the slowest decline in likelihood compared to other scenarios, effectively mitigating the adverse effects of 3D-properties.

\Cref{tab:bc_math_combined} show the performance enhancement in MATH and SuperCLUE by vanilla DPO training respectively. It is easy to see that Scenario 1 where both the chosen responses and the rejected responses are on-policy performs best.

\Cref{fig:beta2_varying} and \Cref{supple:tab:other methods on MATH} represent the additional results on DPO variants and regularization techniques. It can be seen that DPO variants can achieve on-par or better performance compared with vanilla DPO. \Cref{fig:beta2_varying} shows that as $\beta^-$ decreases, the performance in poem generation initially improves, reaches the peak point at around $\beta^-=0.08$, and subsequently declines.  The initial improvement is intuitive. We conjecture
that an excessively low $\beta^-$ may cause DPO to perform like SFT on the chosen responses, thereby reducing its generalization capabilities. For different tasks, the peak point of $\beta^-$ can be different. For example, in mathematical reasoning, we can set $\beta^-$ to be $0.01$ and achieve better performance than vanilla DPO.

\Cref{fig:true_gradient} shows the change of the absolute value of gradient for the chosen and rejected responses ($\partial \ell^{\text{DPO}}/\partial \pi^+$ and $\partial \ell^{\text{DPO}}/\partial \pi^-$) during the training process of DPO on MATH. It can be seen that $|\partial \ell^{\text{DPO}}/\partial \pi^-| \gg |\partial \ell^{\text{DPO}}/\partial \pi^-|$, and the increasing rate of $\partial \ell^{\text{DPO}}/\partial \pi^-$ is much higher than that of $\partial \ell^{\text{DPO}}/\partial \pi^+$, which is align with Property~\ref{property: drastic drop in rejected response likelihood}.

\Cref{fig:math_va1_acc_loss} and \Cref{fig:hh_va1_acc_loss} show the DPO convergence process with model trained on MATH and HH-rlhf respectively. In the second epoch, the accuracy growth of the model slows down sharply, which indicates that the model overfits the training data. The results further confirm that DPO is an aggressive optimization strategy compared to RLHF-PPO, and makes the impact of 3D-properties more prominent.

Among the four scenarios, Scenario 1, where both chosen and rejected responses are on-policy, ensures a more stable DPO training process and delivers the best testing performance, as shown in \Cref{tab:DPO 4 scenario results mean}. As shown in \Cref{tab:on_off_policy}, Scenario 1 exhibits the slowest decline in likelihood compared to other scenarios, effectively mitigating the adverse effects of 3D-properties. This explains the superior performance of on-policy DPO in our tests. 

In addition to pure on-policy and off-policy DPO, Scenario 2 where the chosen response is off-policy and the rejected response is on-policy is also commonly seen in the industry. For example, in some math problems, researchers used to treat the accurate solution in the dataset to be the chosen response and the wrong answer generated by the LLM itself to be the rejected response, which we show is harmful. Scenario 3 is a mirror experiment to Scenario 2. These additional experimental results confirm that as long as the off-policy data is mixed into the training preference dataset, the performance of DPO will be weakened.

\begin{table}[t]
  \setlength{\tabcolsep}{14pt}
  \caption{Impact of on-policy training data on the results. $\log \pi(a^+)$ and $\log \pi(a^-)$ represent the average log probability per token for the chosen and rejected responses, respectively.}
  \label{tab:on_off_policy}
  \centering
  \begin{tabular}{lcccc}
    \toprule
    \multicolumn{1}{c}{} & \multicolumn{2}{c}{Baichuan2-13B} & \multicolumn{2}{c}{Baichuan2-33B} \\
    & $\log \pi(a^+)$  & $\log \pi(a^-)$ & $\log \pi(a^+)$  & $\log \pi(a^-)$ \\
    \midrule
    basemodel & -0.9181 & -0.9393 & -0.3603 &   -0.3634  \\
    DPO in Scenario 1  &  -0.9681  & -0.9982 &  -0.3629  &  -0.3670 \\
    \midrule

    basemodel & -1.6776 &  -0.9238 & -1.4314 & -0.3525  \\
    DPO in Scenario 2  & -1.6265   &  -1.2293 & -1.2734  & -0.4254  \\
    \midrule
    basemodel & -0.9461 & -1.7110 & -0.3460 &  -1.1204\\
    DPO in Scenario 3 &  -0.8786  & -1.8848 & -0.3439 & -1.4333 \\
    \midrule
    basemodel & -1.7468 &   -1.7110 & -1.2838 & -1.1451\\
    DPO in Scenario 4  &  -1.6617  &  -1.8265 & -1.2273 & -1.2234 \\
    \bottomrule
  \end{tabular}
\end{table}

\begin{table}[t]
  \setlength{\tabcolsep}{3pt}
  \caption{Vanilla DPO: Baichuan2-13B and Baichuan2-33B accuracy on MATH$^{*}$ and SuperCLUE.}
  \label{tab:bc_math_combined}
  \centering
  \begin{tabular}{cccccccccc}
    \toprule
    & \multicolumn{4}{c}{Baichuan2-13B} & \multicolumn{4}{c}{Baichuan2-33B} \\
    \cmidrule(lr){2-5} \cmidrule(lr){6-9}
    & \multicolumn{2}{c}{MATH$^{*}$} & \multicolumn{2}{c}{SuperCLUE} & \multicolumn{2}{c}{MATH$^{*}$} & \multicolumn{2}{c}{SuperCLUE} \\
    & 5 points & 4\&5 points & 5 points & 4\&5 points & 5 points & 4\&5 points & 5 points & 4\&5 points \\
    \midrule
    basemodel & 6.0\% & 12.2\% & 46.3\% & 58.8\% & 25.7\% & 36.5\% & 79.5\% & 84.4\% \\
    \midrule
    Scenario 1 & 7.9\% & 14.4\% & 52.8\% & 64.6\% & 29.9\% & 37.5\% & 80.2\% & 86.6\% \\
    Scenario 2 & 4.8\% & 9.2\% & 47.9\% & 61.8\% & 29.2\% & 36.6\% & 72.2\% & 79.0\% \\
    Scenario 3 & 4.3\% & 12.8\% & 41.2\% & 57.0\% & 28.2\% & 37.3\% & 74.8\% & 84.9\% \\
    Scenario 4 & 3.2\% & 9.3\% & 39.5\% & 52.9\% & 28.6\% & 38.1\% & 80.2\% & 85.8\% \\
    \bottomrule
  \end{tabular}
\end{table}

\begin{figure}[thbp]
  \centering
  \includegraphics[width=0.5\columnwidth]{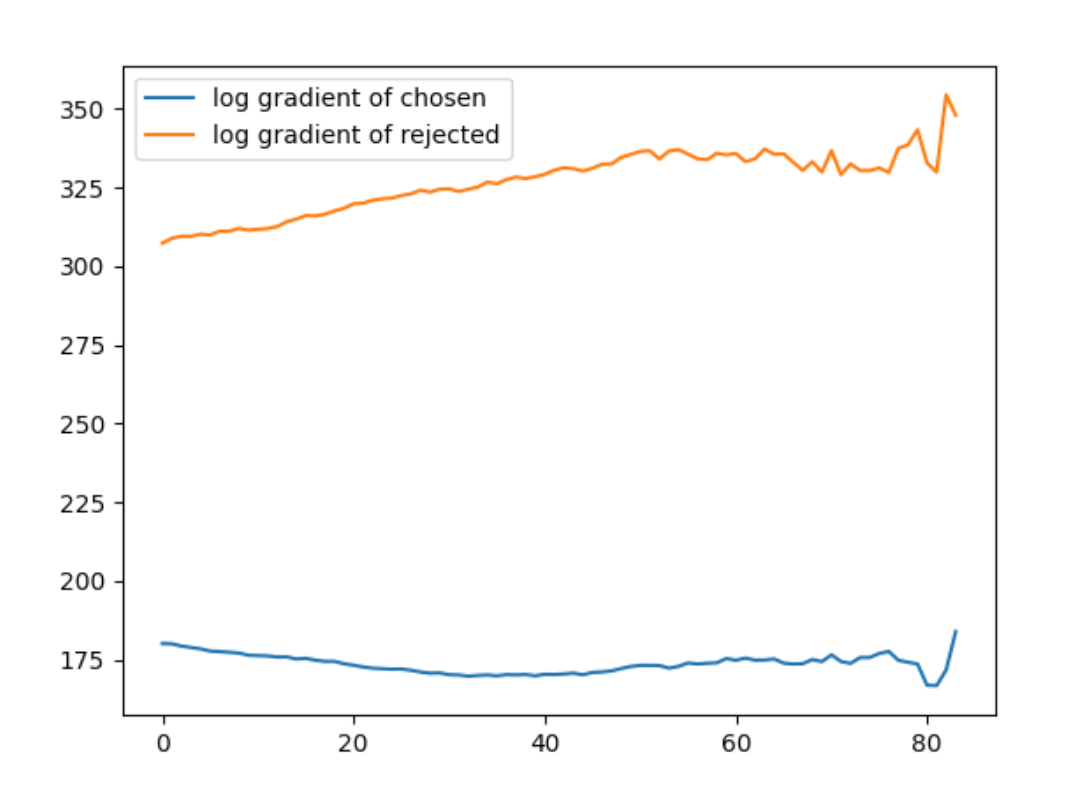}
  \caption{When training with on-policy data, the absolute value of the gradient for rejected responses increases, while the absolute value of the gradient for chosen responses remains almost unchanged.}
  \label{fig:true_gradient}
\end{figure}

\begin{table}[t]
 \small
  \caption{DPO and its variants/regularized version performance on mathematical reasoning. In Flex-DPO, $\beta^+=0.1$, $\beta^-=0.08$.}
  \label{supple:tab:other methods on MATH}
  \centering
  \begin{tabular}{ccccc}
    \toprule
    & \multicolumn{2}{c}{MATH$^{*}$} & \multicolumn{2}{c}{SuperCLUE} \\
     &  5 points   & 4\&5 points &  5 points & 4\&5 points \\
    \midrule
    basemodel & 25.7\% &  36.5\% & 79.5\% & 84.4\%\\
    \midrule
    DPO  & 29.9\%  &  37.5\% & 80.2\% & 86.6\%\\
    Flex-DPO & \textbf{30.1\%}  &  38.0\% & \textbf{81.2\%} & \textbf{86.7\%}  \\
    IPO  & 30.0\%  &  37.7\%   &  80.5\% & 85.9\%\\
    SLiC  & 29.3\%  &  \textbf{38.7\%}   & 79.7\% & 84.5\%\\
    \bottomrule
  \end{tabular}
\end{table}

\begin{figure}
    \centering
    \begin{subfigure}[t]{0.45\textwidth}
        \centering
        \includegraphics[width=\linewidth]{figure/main_text/hh_eval_522.pdf}
        \caption{Accuracy of RM and DPO on HH-rlhf eval set over the training process.}
        \label{fig:hh_eval}
    \end{subfigure}
    \hspace{0.05\textwidth}
    \begin{subfigure}[t]{0.45\textwidth}
        \centering
        \includegraphics[width=\linewidth]{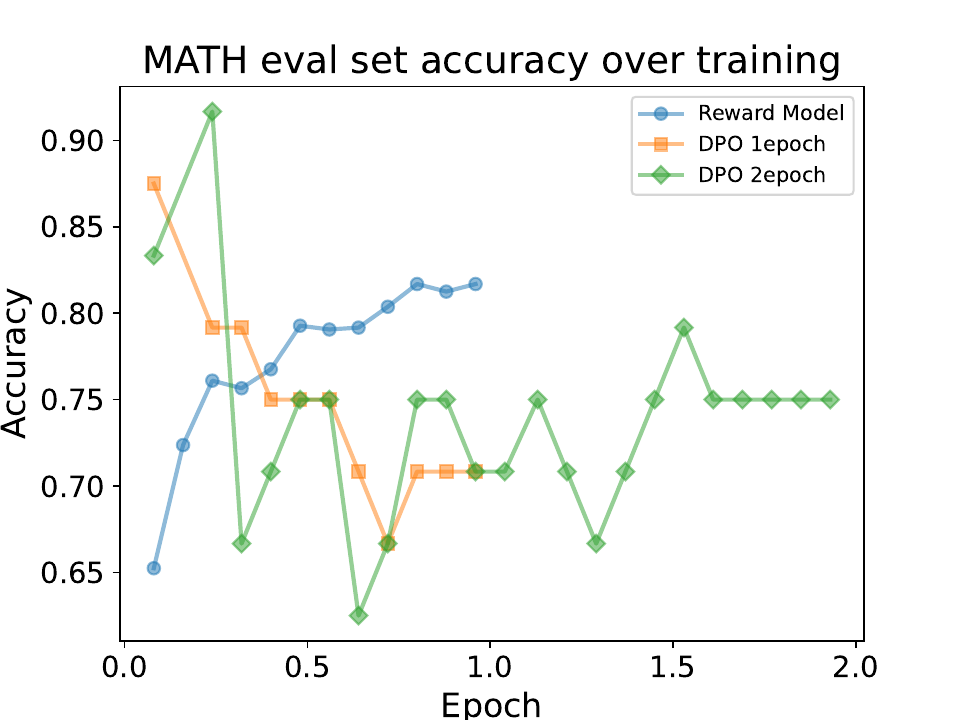}
        \caption{Accuracy of RM and DPO on MATH eval set over the training process.}
        \label{fig:math_eval}
    \end{subfigure}
    \caption{Comparison of DPO and RM Training. RM training demonstrates greater stability, while DPO training shows significant fluctuations.}
    \label{fig:subfigures}
    \vspace{-8pt}
\end{figure}


\begin{figure}[thbp]
    \centering
    \begin{subfigure}[t]{0.47\textwidth}
        \centering
        \includegraphics[width=\linewidth]{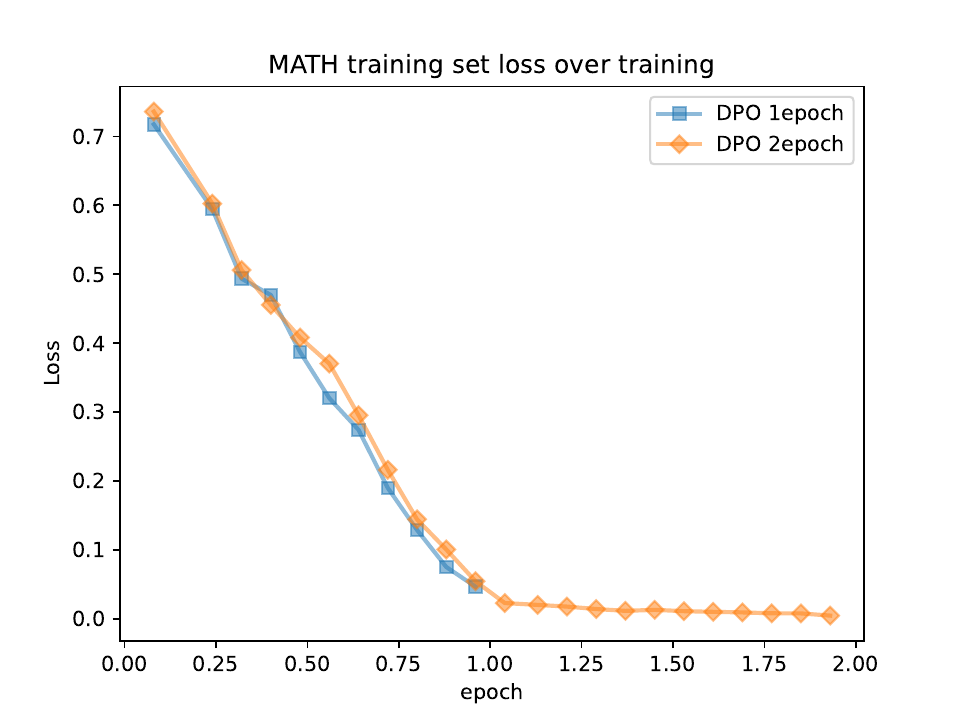}
        \caption{MATH train dataset loss.}
        \label{fig:math_train_loss}
    \end{subfigure}
    \hspace{0.01\textwidth}
    \begin{subfigure}[t]{0.47\textwidth}
        \centering
        \includegraphics[width=\linewidth]{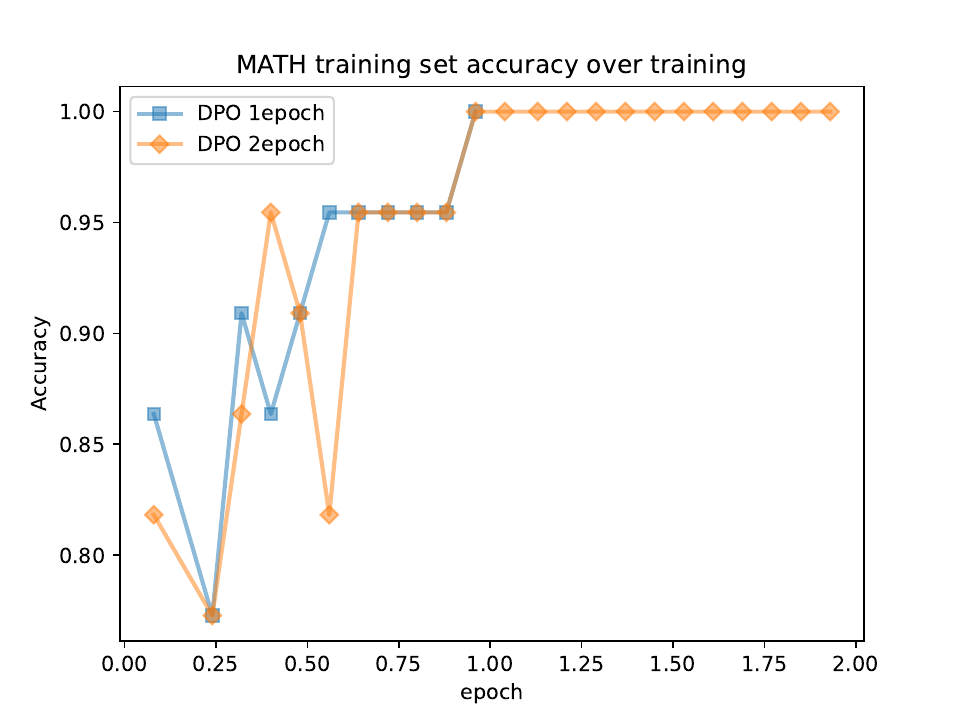}
        \caption{MATH train dataset accuracy.}
        \label{fig:math_train_eval}
    \end{subfigure}
    \caption{Comparison between DPO and RM training on the training set of MATH. As can be seen, in the second epoch of DPO training, the loss is very small and the accuracy of distinguishing the chosen response from the rejected response is 100\%, which indicates that the model overfits the training data.}
    \label{fig:math_va1_acc_loss}
\end{figure}


\begin{figure}[thbp]
    \centering
    \begin{subfigure}[t]{0.47\textwidth}
        \centering
        \includegraphics[width=\linewidth]{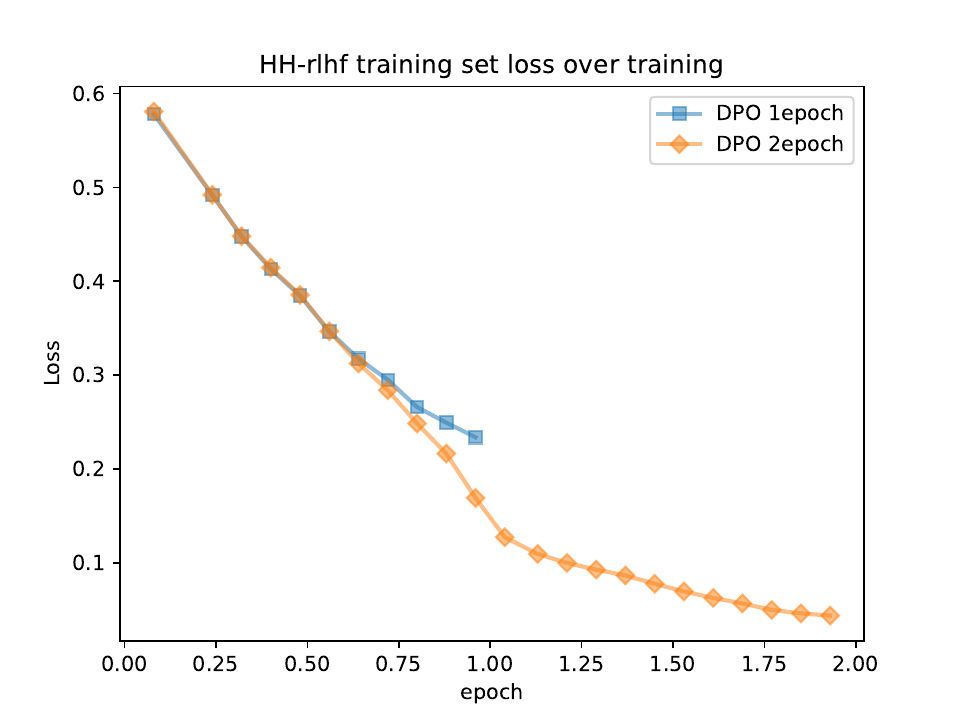}
        \caption{HH-rlhf train dataset loss.}
        \label{fig:hh_train_loss}
    \end{subfigure}
    \hspace{0.01\textwidth}
    \begin{subfigure}[t]{0.47\textwidth}
        \centering
        \includegraphics[width=\linewidth]{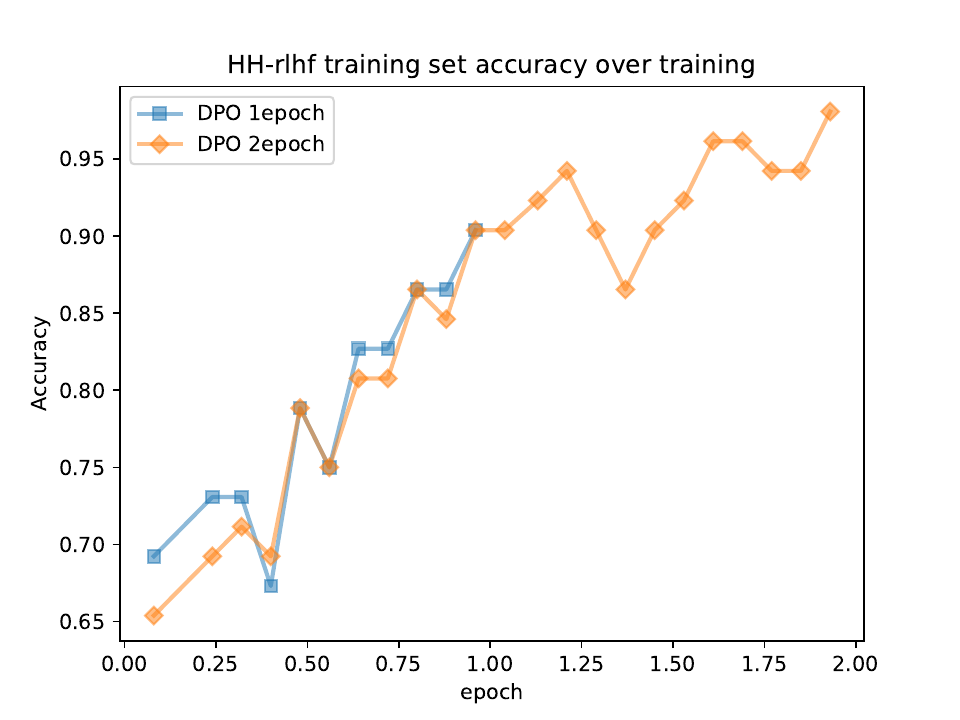}
        \caption{HH-rlhf train dataset accuracy.}
        \label{fig:hh_train_acc}
    \end{subfigure}
    \caption{Comparison between DPO and RM training on the test set of HH-rlhf.}
    \label{fig:hh_va1_acc_loss}
\end{figure}

\subsection{Limitations}
\label{appen:subsec:limitations}
Despite the insights provided in this study, several limitations remain. First, while our theoretical analysis and experiments highlight the 3D-properties as a key factor in DPO's suboptimal performance, the complexity of real-world LLMs may involve additional factors that were not fully explored. Second, our experimental evaluation, though spanning diverse tasks such as mathematical reasoning and instruction following, is limited in scope and may not generalize across all LLM applications. Additionally, the regularization techniques proposed, while effective in our controlled settings, require further validation in larger-scale models and more diverse datasets. Lastly, although we contrasted DPO with RM-based alignment, our study does not exhaustively address other potential reward-free methods, leaving open questions for future exploration.


\end{document}